\documentclass{article}

\usepackage{microtype}
\usepackage{graphicx}
\usepackage{subcaption}

\usepackage{booktabs} %

\usepackage{hyperref}

\usepackage[accepted]{icml2026}

\usepackage{amsmath}
\usepackage{amssymb}
\usepackage{mathtools}
\usepackage{amsthm}
\usepackage[normalem]{ulem}
\usepackage{multirow}
\usepackage[table]{xcolor}
\usepackage{colortbl}
\usepackage{tabularx}
\definecolor{methodhighlight}{gray}{0.85}
\usepackage{enumitem}
\theoremstyle{plain}
\newtheorem{theorem}{Theorem}[section]
\newtheorem{proposition}[theorem]{Proposition}
\newtheorem{lemma}[theorem]{Lemma}
\newtheorem{corollary}[theorem]{Corollary}
\theoremstyle{definition}
\newtheorem{definition}[theorem]{Definition}
\newtheorem{assumption}[theorem]{Assumption}
\theoremstyle{remark}
\newtheorem{remark}[theorem]{Remark} 

\usepackage[capitalize,noabbrev]{cleveref}

\usepackage{placeins}  %

\usepackage[disable,textsize=tiny]{todonotes}

\icmltitlerunning{Advantage Collapse in Group Relative Policy Optimization: Diagnosis and Mitigation}

\begin{document}

\twocolumn[
  \icmltitle{Advantage Collapse in Group Relative Policy Optimization: Diagnosis and Mitigation}

  \icmlsetsymbol{equal}{*}

  \begin{icmlauthorlist}
    \icmlauthor{Xixiang He}{yyy}
    \icmlauthor{Qiyao Sun}{yyy}
    \icmlauthor{Ao Cheng}{yyy}
    \icmlauthor{Xingming Li}{yyy}
    \icmlauthor{Xuanyu Ji}{yyy}
    \icmlauthor{Hailun Lu}{comp}
    \icmlauthor{Runke Huang}{sch}
    \icmlauthor{Qingyong Hu}{comp}
  \end{icmlauthorlist}

  \icmlaffiliation{yyy}{National University of Defense Technology, Changsha, Hunan, China}
  \icmlaffiliation{comp}{Intelligent Game and Decision Lab, Beijing, China}
  \icmlaffiliation{sch}{The Chinese University of Hong Kong, Shenzhen, Guangdong, China}

  \icmlcorrespondingauthor{Xixiang He}{hexixiang@nudt.edu.cn}
  \icmlcorrespondingauthor{Qingyong Hu}{huqingyong15@outlook.com}

  \icmlkeywords{Reinforcement Learning, Policy Optimization, Large Language Models, Mathematical Reasoning}

  \vskip 0.3in
]

\printAffiliationsAndNotice{}  %

\begin{abstract}
  Group Relative Policy Optimization (GRPO), a prominent algorithm within 
  the Reinforcement Learning from Verifiable Rewards (RLVR) framework, 
  has achieved strong results in improving the reasoning capabilities of 
  large language models (LLMs). However, GRPO is prone to \textit{advantage 
  collapse}, a failure mode where homogeneous rewards within a group 
  (\textit{e.g.}, all correct or all incorrect answers) yield near-zero 
  advantages and vanishing gradients. To address this, we introduce the 
  Advantage Collapse Rate (ACR), the first diagnostic metric quantifying 
  the proportion of training batches with ineffective gradients. Across
  models from 0.5B to 14B parameters on mathematical reasoning benchmarks,
  we show that ACR strongly predicts training stagnation and final performance.
  We then propose Adaptive Virtual Sample Policy Optimization (AVSPO), a
  lightweight extension of GRPO that injects virtual reward samples, guided
  by real-time ACR monitoring, to enable learning from homogeneous groups without additional
  model rollouts. AVSPO reduces advantage collapse by 58--63\% relative to
  GRPO and yields consistent accuracy gains of 4--6 percentage points across
  all model scales, while maintaining generalization on the evaluated out-of-domain task.~Code and datasets are available at \url{https://github.com/hexixiang/Advantage-Collapse-Rate}.
  \end{abstract}

\begin{figure}[t]
  \vskip 0.1in
  \begin{center}
    \centerline{\includegraphics[width=\columnwidth]{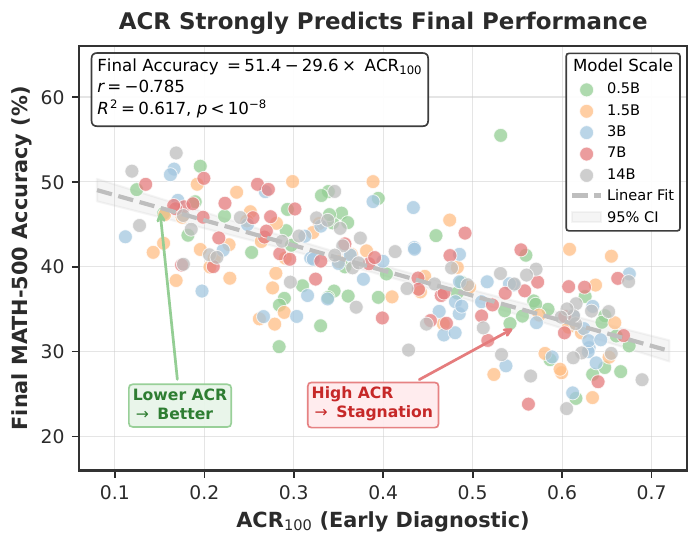}}
    \caption{
    \textbf{Early ACR predicts final performance.} Each point represents one training run across 245 configurations (5 model scales, 7 difficulty levels, 7 group sizes). Strong negative correlation ($R^2 = 0.617$) between early-stage ACR and final accuracy. Points colored by model scale.
    }
    \label{fig:teaser}
  \end{center}
\end{figure}

\begin{figure*}[t]
  \begin{center}
    \centerline{\includegraphics[width=0.93\textwidth]{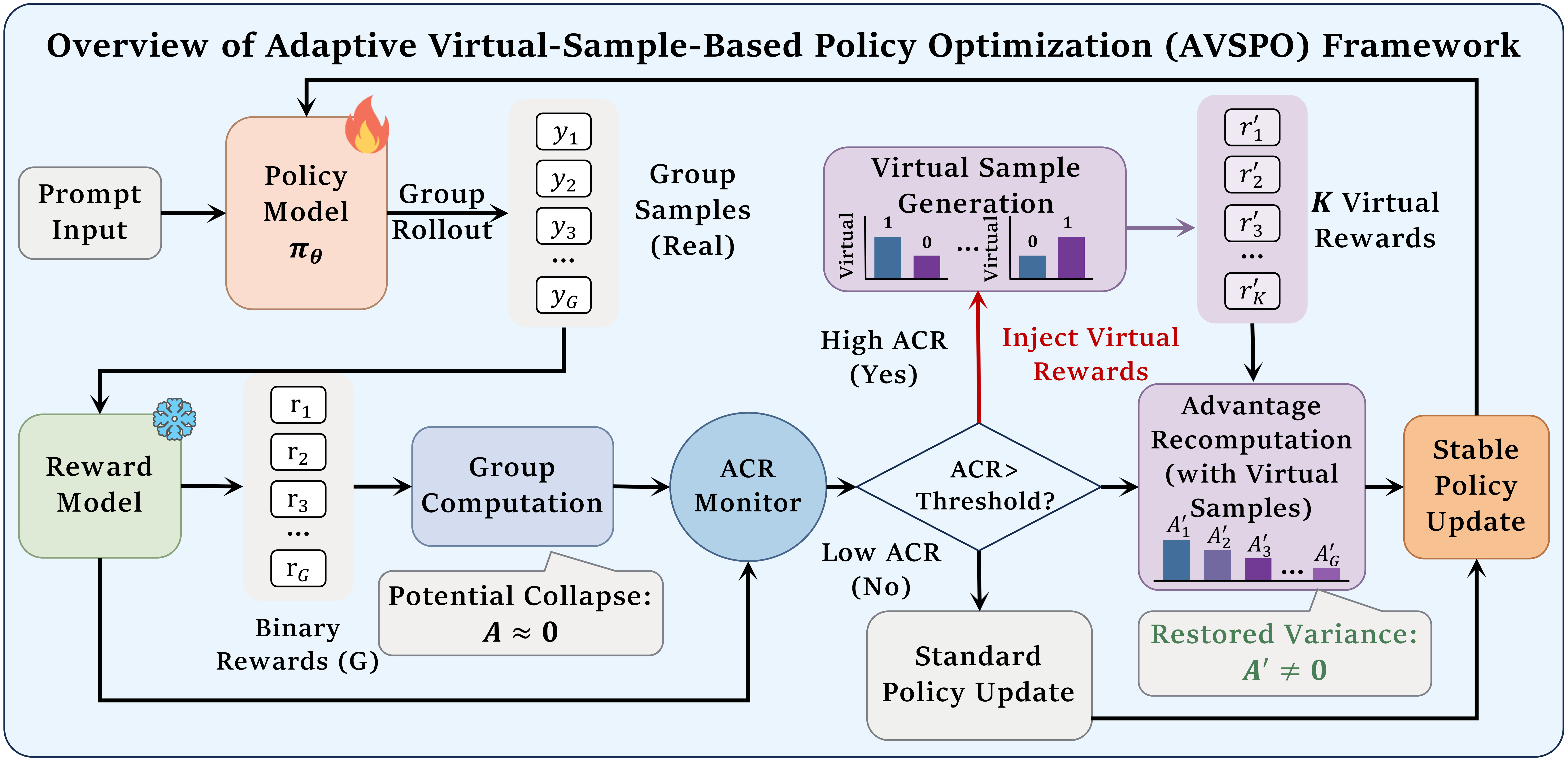}}
    \caption{
    \textbf{Overview of AVSPO.} The policy generates $G$ responses per prompt with binary rewards. When all rewards are identical, GRPO's advantages collapse to zero ($A = 0$). AVSPO monitors ACR in real-time; upon detecting collapse, it injects virtual samples to restore reward variance and recompute non-zero advantages ($A' \neq 0$) for stable policy updates.
    }
    \label{fig:overview}
  \end{center}
\end{figure*}

\section{Introduction}

Reinforcement learning from verifiable rewards (RLVR) has emerged as a promising paradigm for improving mathematical reasoning in large language models~\citep{shao2024deepseekmath,guo2025deepseek}. Unlike Reinforcement Learning from Human Feedback (RLHF)~\citep{RLHF}, which relies on human preference, RLVR leverages automated verifiers to provide binary feedback on solution correctness. Within this framework, Group Relative Policy Optimization (GRPO) has emerged as a leading approach. GRPO computes advantages through intra-group reward comparisons rather than learned value baselines. This design eliminates the critic network required by actor-critic methods such as PPO~\citep{schulman2017proximal}, significantly reducing memory consumption.

However, GRPO suffers from a known failure mode termed \textit{advantage collapse}~\citep{zhang2025edgegrpo,collapse,yu2025dapo,le2025rlzvp}. When all responses in a group receive identical rewards, the within-group variance vanishes, causing all advantages to collapse to zero. The policy gradient then becomes ineffective regardless of individual response quality. This phenomenon is prevalent in mathematical reasoning tasks where binary verification often yields homogeneous outcomes. In our experiments across five model scales on mathematical reasoning benchmarks, we observe that 28--45\% of training batches exhibit complete advantage collapse. This failure remains invisible to conventional metrics such as loss curves and accuracy, leading to wasted computation.

Several strategies have been proposed to address gradient ineffectiveness in policy optimization. Generalized Advantage Estimation (GAE)~\citep{schulman2015high} provides variance reduction but requires a critic network incompatible with GRPO's architecture. Process Reward Models (PRMs)~\citep{lightman2023let} offer dense supervision but demand expensive step-level annotations. Entropy regularization~\citep{shen2025entropycontrolllmrlalgorithms} encourages exploration but applies uniform pressure regardless of reward structure. Recent GRPO variants such as DAPO \citep{yu2025dapo} introduce system-level improvements for long-CoT RL, including clipping modifications and dynamic sampling, yet none provides a diagnostic tool to quantify collapse severity before it impacts training.

This gap motivates a different perspective: rather than modifying the optimization algorithm, we focus on \textit{diagnosing} advantage collapse in real time. Our key insight is that collapse is directly measurable through reward variance statistics already computed during GRPO training. As shown in Figure~\ref{fig:teaser}, early-stage collapse rate strongly predicts final performance ($R^2 = 0.617$), enabling early detection of problematic configurations before substantial compute is wasted. This predictive power motivates our approach: real-time monitoring combined with adaptive intervention when collapse is detected. To summarize, our contributions are threefold:

\begin{itemize}
\item \textbf{Diagnostic metric.} We introduce the \textit{Advantage
Collapse Rate} (ACR), the first metric specifically designed to quantify gradient ineffectiveness in group-based policy optimization. ACR leverages statistics already computed in GRPO. Early-stage ACR explains 62\% of variance in final performance, enabling practitioners to identify problematic configurations before accuracy degradation becomes visible.

\item \textbf{Algorithmic intervention.} We propose \textit{Adaptive Virtual Sample Policy Optimization} (AVSPO), a plug-and-play extension of GRPO that injects virtual reward samples when collapse is detected. AVSPO creates non-zero advantages to enable learning from otherwise wasted batches, requiring no additional LLM forward passes.

\item \textbf{Empirical validation.} Across models from 0.5B to 14B parameters on six mathematical reasoning benchmarks, AVSPO reduces advantage collapse from 28--45\% to 11--18\% (58--63\% relative reduction), yielding accuracy gains of 4--6 percentage points over GRPO while maintaining generalization on the out-of-domain benchmark (MMLU-Pro).
\end{itemize}

Our diagnostic approach complements algorithmic improvements. ACR can monitor any group-based policy optimization method, beyond GRPO. We hope this work motivates the community to develop real-time diagnostics for training efficiency, enabling practitioners to detect and address gradient ineffectiveness before wasting computation.

\section{Related Work}
\textbf{GRPO and Advantage Collapse.}
GRPO~\citep{shao2024deepseekmath} eliminates the critic network required by PPO~\citep{schulman2017proximal}, significantly reducing memory consumption~\citep{guo2025deepseek,lambert2024tulu}. However, identical within-group rewards cause advantage collapse~\citep{collapse,yu2025dapo,le2025rlzvp}. Existing solutions fall short: GAE~\citep{schulman2015high} requires a critic; PRMs~\citep{lightman2023let} need expensive annotations; entropy regularization~\citep{shen2025entropycontrolllmrlalgorithms,abs-2508-02260} ignores reward structure.

\textbf{GRPO Variants.}
Prior works address GRPO limitations at the \emph{optimization level}: gradient estimation~\citep{liu2025drgrpo,he2025deltalnorm,hu2025reinforceplusplus}, clipping mechanisms~\citep{yang2025dcpodynamicclippingpolicy}, system-level recipes with dynamic sampling~\citep{yu2025dapo}, value estimation~\citep{sane2025hybridgrpo}, and token-level weighting for sharpness control~\citep{le2026sharpnesscontrolledgrouprelativepolicy}. Others operate at the \emph{policy level}: reward shaping~\citep{gupta2025alphaPO}, exploration~\citep{nan2025ngrpo,hou2025t1}, and entropy control~\citep{wang2025aepo}. Concurrent works target related failure modes~\citep{grpo3,bai2025mgrpo,yang2026grouprelativeadvantagebiased}; Scaf-GRPO~\citep{zhang2025scafgrposcaffoldedgrouprelative} addresses collapse via progressive in-prompt scaffolding. AVSPO instead intervenes at the \emph{reward statistics level}, making it complementary.

\textbf{Diagnostic Metrics.}
Standard RL monitoring uses policy entropy, KL divergence, and gradient norms~\citep{castanyer2025stablegradientsstablelearning,henderson2018deeprl,zheng2025stabilizing}. Parallel efforts target reward hacking detection~\citep{shihab2025est} and entropy-driven instability in off-policy settings~\citep{xi2025bapo}.~Recent work~\citep{yang2025lopti} identifies token-level imbalances but not batch-level collapse. Existing diagnostics assume critic-based methods~\citep{grpo2}.~ACR fills this gap by quantifying gradient ineffectiveness for GRPO.

\section{Preliminaries}
\label{sec:prelim}
    
We frame mathematical reasoning as a contextual bandit problem~\citep{langford2007epoch} under the RLVR paradigm. Given a question $q \sim \mathcal{Q}$, a language model $\pi_\theta$ autoregressively generates a solution trajectory $y = (y_1, \dots, y_T)$. Upon completion, an automated verifier $V$ (\textit{e.g.}, via code execution or symbolic matching) assigns a binary reward $r(q, y) \in \{0, 1\}$. The objective is to maximize expected reward:
\begin{equation}
    J(\theta) = \mathbb{E}_{q \sim \mathcal{Q},\, y \sim \pi_\theta(\cdot \mid q)}[r(q, y)].
\end{equation}
This sparse, outcome-only supervision poses a fundamental challenge: how to extract effective learning signals from binary feedback~\citep{riedmiller2018learning}.

\subsection{Group Relative Policy Optimization}
\label{subsec:grpo}

To address this while maintaining scalability, GRPO eliminates the critic network and instead computes advantages via \textit{intra-group comparisons}. For each question $q$, it samples $G$ independent responses $\mathcal{O} = \{y^{(1)}, \dots, y^{(G)}\}$ from the old policy $\pi_{\theta_{\text{old}}}$ and obtains their rewards $\mathcal{R}=\{r_1, \dots, r_G\}$. The advantage for the $i$-th sample is:
\begin{equation}
    \hat{A}_i = \frac{r_i - \mu_{\mathcal{R}}}{\sigma_{\mathcal{R}} + \epsilon},
    \label{eq:grpo_adv}
\end{equation}
where $\mu_{\mathcal{R}}$ and $\sigma_{\mathcal{R}}$ are the group mean and standard deviation, and $\epsilon > 0$ ensures numerical stability. The policy is updated via a clipped surrogate objective:
\begin{equation}
\begin{aligned}[t]
\mathcal{J}^{\text{GRPO}}(\theta)
&= \mathbb{E}_{q \sim \mathcal{Q},\, \mathcal{O} \sim \pi_{\theta_{\text{old}}}(\cdot|q)} \Bigg[
\frac{1}{G} \sum_{i=1}^G \frac{1}{|y^{(i)}|}
\sum_{t=1}^{|y^{(i)}|} \\
&\quad \min\Big(
\rho_t^{(i)} \hat{A}_i,\;
\text{clip}(\rho_t^{(i)}, 1\!-\!\varepsilon, 1\!+\!\varepsilon) \hat{A}_i
\Big)
\Bigg],
\end{aligned}
\label{eq:grpo_obj}
\end{equation}
where $\rho_t^{(i)} = \pi_\theta(y_t^{(i)} | q, y_{<t}^{(i)}) / \pi_{\theta_{\text{old}}}(y_t^{(i)} | q, y_{<t}^{(i)})$ is the policy ratio, and $\varepsilon$ is the clipping range (distinct from $\epsilon$ for numerical stability). We set the KL penalty $\beta = 0$ following standard practice~\citep{shao2024deepseekmath}.

\subsection{The Advantage Collapse Phenomenon}
\label{subsec:collapse}

Despite its efficiency, GRPO exhibits a fundamental failure mode under reward homogeneity. Consider a group where all samples receive identical rewards $r_i = c$ for some constant $c \in \{0, 1\}$. Then $\mu_{\mathcal{R}} = c$ and $\sigma_{\mathcal{R}} = 0$, yielding:
\begin{equation}
\hat{A}_i = \frac{c - c}{0 + \epsilon} = 0, \quad \forall i \in [G].
\label{eq:collapse}
\end{equation}
We term this \textbf{advantage collapse}: the complete loss of gradient signal when within-group reward variance vanishes. The policy gradient contribution from such a group becomes:
\begin{equation}
\sum_{i=1}^{G} \hat{A}_i \nabla_\theta \log \pi_\theta(y^{(i)}|q) = \mathbf{0}.
\label{eq:zero_gradient}
\end{equation}
This pathology manifests in two symmetric scenarios under binary rewards:
\begin{itemize}[leftmargin=*,itemsep=0pt,topsep=2pt]
\item \textbf{All-incorrect} ($r_i = 0\;\forall i$): The model fails uniformly on hard problems.
\item \textbf{All-correct} ($r_i = 1\;\forall i$): The model succeeds uniformly on easy problems.
\end{itemize}
In both cases, the verifier provides valid feedback, yet GRPO extracts no learning signal. This failure is \textit{invisible} to standard diagnostics: loss curves and accuracy metrics appear stable while gradient updates become ineffective. Addressing this requires a diagnostic to detect collapse and an intervention to recover collapsed groups, motivating our ACR metric and AVSPO algorithm (Section~\ref{sec:method}).

\section{Methodology}
\label{sec:method}

We present a unified approach to diagnosing and mitigating advantage collapse in GRPO. Our method comprises two tightly integrated components: (1) the \textbf{Advantage Collapse Rate (ACR)}, a lightweight diagnostic metric that quantifies gradient ineffectiveness in real-time, and (2) \textbf{Adaptive Virtual Sample Policy Optimization (AVSPO)}, an ACR-guided intervention mechanism that enables learning from collapsed groups. Together, these components form a principled framework for stabilizing GRPO training without compromising its computational efficiency.

\subsection{Advantage Collapse Rate (ACR)}
\label{subsec:acr}

ACR operationalizes the collapse condition inherent in GRPO's advantage formula (Eq.~\ref{eq:grpo_adv}). Recall that $\hat{A}_i \propto (r_i - \mu_{\mathcal{R}})/\sigma_{\mathcal{R}}$. When $\sigma_{\mathcal{R}} \to 0$, advantages vanish regardless of individual reward magnitudes, causing gradients to collapse. ACR directly monitors this condition.

\begin{definition}[Advantage Collapse Rate]
\label{def:acr}
Given a training batch consisting of $N$ question-group pairs $\{(q_1, \mathcal{O}_1), \ldots, (q_N, \mathcal{O}_N)\}$, where each group $\mathcal{O}_j = \{y_j^{(1)}, \ldots, y_j^{(G)}\}$ contains $G$ sampled solutions with corresponding rewards $\mathcal{R}_j = \{r_j^{(1)}, \ldots, r_j^{(G)}\}$, ACR is defined as:
\begin{equation}
\text{ACR} = \frac{1}{N} \sum_{j=1}^{N} \mathbb{I}\left(\sigma_{\mathcal{R}_j} < \tau\right)
\label{eq:acr}
\end{equation}
where $\sigma_{\mathcal{R}_j} = \sqrt{\frac{1}{G}\sum_{i=1}^{G}(r_j^{(i)} - \mu_{\mathcal{R}_j})^2}$ is the standard deviation of rewards in group $j$, $\mathbb{I}(\cdot)$ is the indicator function, and $\tau$ is a small threshold (typically $\tau = 10^{-6}$) accounting for numerical precision.
\end{definition}

\noindent\textbf{Interpretation.} ACR quantifies the proportion of groups within a batch that have negligible gradient signals:
\begin{itemize}
\setlength{\itemsep}{0pt}
    \item \textbf{ACR $\approx$ 0}: Optimal learning conditions. All groups exhibit reward diversity, ensuring non-zero gradients throughout the batch.
    \item \textbf{ACR $\approx$ 1}: Complete gradient stagnation. Every group suffers from reward homogeneity, causing vanishing learning signals.
    \item \textbf{$0 < \text{ACR} < 1$}: Partial effectiveness. The magnitude indicates the fraction of computational resources spent on ineffective gradient updates.
\end{itemize}

ACR incurs zero computational overhead since it monitors reward statistics already computed during standard GRPO training.

\subsection{Adaptive Virtual Sample Policy Optimization}
\label{subsec:virtual_samples}

When ACR indicates advantage collapse ($\text{ACR}^{(n)} > \tau_{\text{adapt}}^{(n)}$), AVSPO generates virtual samples to restore reward diversity. Virtual samples are \textit{synthetic reward values}, not actual model outputs, that participate only in the normalization statistics ($\mu_{\mathcal{R}'}$ and $\sigma_{\mathcal{R}'}$) for advantage computation. They do not contribute to the policy gradient, as no corresponding $\nabla_\theta \log \pi_\theta$ term exists for virtual samples.

\textbf{Construction Mechanism.} For a collapsed group $\mathcal{O}_j$ with homogeneous rewards $\mathcal{R}_j$, we construct a set of virtual samples:
\begin{equation}
\mathcal{V}_j = \{v_1, v_2, \ldots, v_{K}\}
\label{eq:virtual_set}
\end{equation}
where the number of virtual samples $K$ is determined adaptively:
\begin{equation}
K = \max\left(1, \min\left(G, \left\lceil G \cdot (\text{ACR}^{(n)})^{\alpha} \right\rceil\right)\right)
\label{eq:num_virtual}
\end{equation}
with sensitivity parameter $\alpha \in (0, 1]$ controlling augmentation strength. Setting $\alpha = 0.5$ provides effective scaling: when ACR is low ($\approx 0$), minimal intervention ($K \approx 1$); when ACR is high ($\approx 1$), stronger support ($K \approx G$).

\textbf{Stratified Reward Assignment.} Each virtual sample $v_k$ is assigned a reward value that ensures non-zero variance regardless of whether the collapsed group consists of all-correct or all-incorrect responses. Let $r_{\text{obs}} = \max(\mathcal{R}_j)$ denote the observed maximum reward in the original group. The virtual rewards are assigned as:
\begin{equation}
r_{v_k} = \begin{cases}
r_{\text{obs}} \cdot \left(1 - \frac{k}{K+1}\right) & \text{if } r_{\text{obs}} > 0 \\[6pt]
r_{\text{anchor}} \cdot \frac{K - k + 1}{K} & \text{if } r_{\text{obs}} = 0
\end{cases}
\label{eq:virtual_rewards}
\end{equation}
where $r_{\text{anchor}} > 0$ is a small positive constant (we use $r_{\text{anchor}} = 0.1$ in all experiments). Note that while actual rewards are binary ($r \in \{0,1\}$), virtual rewards take continuous values in $(0, 1]$ to create reward diversity for normalization. The denominator $(K+1)$ in the first case ensures that even with minimal intervention ($K=1$), the virtual reward creates sufficient contrast with the original homogeneous rewards, guaranteeing $\sigma_{\mathcal{R}'_j} > 0$. 

The critical distinction is the handling of \textbf{all-incorrect groups} ($r_{\text{obs}} = 0$), which dominate early training in sparse-reward tasks. Our formulation introduces positive anchor rewards that create contrast against the zero rewards of actual samples, ensuring $\sigma_{\mathcal{R}_j'} > 0$. The constraint $K \leq G$ ensures virtual samples never outnumber real ones, limiting the bias introduced into advantage estimates (analyzed in Section~\ref{subsec:theory}).

\textbf{Adaptive Triggering Mechanism.}
AVSPO employs dynamic thresholding that adapts the intervention trigger based on training progress, avoiding both under-intervention (missing collapse) and over-intervention (unnecessary augmentation).

\textbf{Real-time ACR Monitoring.} At each training iteration $n$, ACR is computed over the current batch using Equation~\ref{eq:acr}, denoted as $\text{ACR}^{(n)}$.

\textbf{Dynamic Threshold Adaptation.} The augmentation threshold $\tau_{\text{adapt}}$ is initialized conservatively ($\tau_{\text{adapt}}^{(0)} = 0.5$) and adjusted based on training stability:
\begin{equation}
\tau_{\text{adapt}}^{(n+1)} = \tau_{\text{adapt}}^{(n)} + \eta \cdot \text{sign}\left(\Delta J^{(n)}\right) \cdot \left(\text{ACR}^{(n)} - \tau_{\text{adapt}}^{(n)}\right)
\label{eq:threshold_update}
\end{equation}
where $\Delta J^{(n)} = \hat{J}(\theta^{(n)}) - \hat{J}(\theta^{(n-1)})$ measures policy improvement estimated via average batch reward $\hat{J}^{(n)} = \frac{1}{NG}\sum_{j,i} r_j^{(i)}$, and $\eta = 0.01$ is a small learning rate. \textbf{Intuition:} When training improves ($\Delta J > 0$), $\tau_{\text{adapt}}$ tracks the observed ACR, reducing unnecessary intervention; when training stagnates ($\Delta J < 0$) under high ACR, $\tau_{\text{adapt}}$ decreases to trigger stronger correction.

\textbf{Conditional Sample Integration.} Virtual samples are incorporated only when collapse is detected:
\begin{equation}
\mathcal{R}_j' = \begin{cases}
\mathcal{R}_j \cup \mathcal{V}_j & \text{if } \text{ACR}^{(n)} > \tau_{\text{adapt}}^{(n)} \text{ and } \sigma_{\mathcal{R}_j} < \tau \\
\mathcal{R}_j & \text{otherwise}
\end{cases}
\label{eq:augmented_rewards}
\end{equation}

\textbf{Advantage Recomputation.}
With the augmented reward set $\mathcal{R}_j'$, advantages are recomputed to incorporate increased reward diversity:
\begin{equation}
\hat{A}_j^{(i)\prime} = \frac{r_j^{(i)} - \mu_{\mathcal{R}_j'}}{\sigma_{\mathcal{R}_j'} + \epsilon}
\label{eq:augmented_advantage}
\end{equation}
where $\mu_{\mathcal{R}_j'} = \frac{1}{G+K}\left(\sum_{i=1}^{G} r_j^{(i)} + \sum_{k=1}^{K} r_{v_k}\right)$ and $\sigma_{\mathcal{R}_j'}$ is the standard deviation of $\mathcal{R}_j'$. Note that advantages are computed only for the $G$ \textit{real} samples; virtual rewards serve solely to shift the normalization baseline, ensuring $\sigma_{\mathcal{R}_j'} > 0$ even when $\sigma_{\mathcal{R}_j} = 0$. The final AVSPO objective function becomes:
\begin{equation}
\begin{split}
\mathcal{J}^{\text{AVSPO}}(\theta) = \mathbb{E}\bigg[&\frac{1}{G}\sum_{i=1}^G \frac{1}{|y^{(i)}|}\sum_{t=1}^{|y^{(i)}|} \\
&\min\Big(\rho_t^{(i)} \hat{A}_i',\, \text{clip}\big(\rho_t^{(i)}, 1\!-\!\varepsilon, 1\!+\!\varepsilon\big) \hat{A}_i'\Big)\bigg],
\end{split}
\label{eq:AVSPO_objective}
\end{equation}
where the expectation is over $q \sim \mathcal{Q}$ and $\mathcal{O} \sim \pi_{\theta_{\text{old}}}(\cdot|q)$, and $\hat{A}_i' \equiv \hat{A}_j^{(i)\prime}$ is computed using Equation~\ref{eq:augmented_advantage} when augmentation is triggered (with the group index $j$ suppressed for clarity), reducing to the standard GRPO advantage otherwise.

\subsection{Theoretical Analysis}
\label{subsec:theory}

\begin{proposition}[Reward Variance and Gradient Scaling]
\label{prop:gradient_scaling}
For a group $\mathcal{O}$ with rewards $\mathcal{R} = \{r_1, \ldots, r_G\}$ having mean $\mu_{\mathcal{R}}$ and standard deviation $\sigma_{\mathcal{R}}$, the sum of squared advantages satisfies:
\begin{equation}
\sum_{i=1}^{G} \hat{A}_i^2 = \frac{G \sigma_{\mathcal{R}}^2}{(\sigma_{\mathcal{R}} + \epsilon)^2}
\label{eq:advantage_sum}
\end{equation}
where $\epsilon > 0$ is the numerical stability constant. Consequently, as $\sigma_{\mathcal{R}} \to 0$, we have $\sum_i \hat{A}_i^2 \to 0$, and the contribution of this group to the policy gradient vanishes. (Proof in Appendix~\ref{app:proofs}.)
\end{proposition}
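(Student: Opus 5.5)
The plan is a direct computation from the definition of $\hat{A}_i$ in Eq.~\ref{eq:grpo_adv}, followed by a short limiting argument and a bound on the group's gradient contribution. The standard deviation is taken in the population form, $\sigma_{\mathcal{R}}^2 = \frac{1}{G}\sum_{i=1}^G (r_i - \mu_{\mathcal{R}})^2$, as in Definition~\ref{def:acr}. This normalization is what yields the factor $G$; with the Bessel-corrected form one would get $G-1$ instead.

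\textbf{Step 1 (the identity).} Since the denominator $\sigma_{\mathcal{R}}+\epsilon$ does not depend on $i$, it factors out of the sum:
\begin{equation*}
\sum_{i=1}^G \hat{A}_i^2 = \frac{1}{(\sigma_{\mathcal{R}}+\epsilon)^2}\sum_{i=1}^G (r_i-\mu_{\mathcal{R}})^2 = \frac{G\sigma_{\mathcal{R}}^2}{(\sigma_{\mathcal{R}}+\epsilon)^2}.
\end{equation*}
The last equality substitutes $\sum_i (r_i-\mu_{\mathcal{R}})^2 = G\sigma_{\mathcal{R}}^2$. This proves Eq.~\ref{eq:advantage_sum}.

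\textbf{Step 2 (the limit).} Consider $f(s) = G s^2/(s+\epsilon)^2$ for $s \ge 0$. Because $\epsilon>0$, $f$ is continuous at $s=0$ with $f(0)=0$. Explicitly, $f(s) \le G s^2/\epsilon^2$, so $\sum_i \hat{A}_i^2 \to 0$ as $\sigma_{\mathcal{R}}\to 0$. In the exactly collapsed case $\sigma_{\mathcal{R}}=0$, every $\hat{A}_i$ equals zero, recovering Eq.~\ref{eq:collapse}.

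\textbf{Step 3 (vanishing gradient contribution).} This is the only step needing care, because the claim must be tied to the actual objective Eq.~\ref{eq:grpo_obj} rather than only to the advantages. The gradient of the group's surrogate term is a sum of terms of the form $\hat{A}_i \cdot g_{i,t}$, where $g_{i,t}$ is either $\nabla_\theta \rho_t^{(i)}$ or zero when clipping is active. By Cauchy--Schwarz, its norm is at most
\begin{equation*}
\Big(\sum_i \hat{A}_i^2\Big)^{1/2}\Big(\sum_i \|\bar g_i\|^2\Big)^{1/2},
\end{equation*}
where $\bar g_i = \frac{1}{G|y^{(i)}|}\sum_t g_{i,t}$. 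Assuming the per-token gradients are bounded, which holds for a fixed parameter point and finitely many tokens, the group's gradient norm is $O(\sigma_{\mathcal{R}})$ and therefore vanishes. At $\theta=\theta_{\text{old}}$ this reduces to the score-function form in Eq.~\ref{eq:zero_gradient}.
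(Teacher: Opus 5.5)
Your Step~1 is the paper's proof: factor out $(\sigma_{\mathcal{R}}+\epsilon)^{-2}$ and use $\sum_i (r_i-\mu_{\mathcal{R}})^2 = G\sigma_{\mathcal{R}}^2$ for the population standard deviation, so the proposal is correct and takes the same route. The paper stops at the identity and treats the limit and the vanishing-gradient clause as immediate, so your Steps~2--3 (the bound $G\sigma_{\mathcal{R}}^2/\epsilon^2$, and the Cauchy--Schwarz estimate giving an $O(\sigma_{\mathcal{R}})$ gradient norm even with clipping active) are a correct supplement rather than a different argument.
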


The quadratic scaling justifies ACR as a diagnostic for gradient ineffectiveness.

\begin{proposition}[Bias from Virtual Augmentation]
\label{prop:bias}
Let $\nabla_\theta \mathcal{J}^{\text{GRPO}}$ denote the standard GRPO gradient estimator and $\nabla_\theta \mathcal{J}^{\text{AVSPO}}$ denote the AVSPO gradient. For a collapsed group with $K$ virtual samples and a given realization of sampled responses, the gradient difference is:
\begin{equation}
\nabla_\theta \mathcal{J}^{\text{AVSPO}} - \nabla_\theta \mathcal{J}^{\text{GRPO}} = \frac{1}{G}\sum_{i=1}^{G} (\hat{A}_i' - \hat{A}_i) \nabla_\theta \log \pi_\theta(y^{(i)}|q)
\label{eq:bias}
\end{equation}
For collapsed groups where $\sigma_{\mathcal{R}} = 0$, we have $\hat{A}_i = 0$, thus $|\hat{A}_i' - \hat{A}_i| = |\hat{A}_i'|$. The augmented advantages $|\hat{A}_i'|$ are bounded since $|\hat{A}_i'| \leq |r_i - \mu_{\mathcal{R}'}|/\sigma_{\mathcal{R}'} \leq \sqrt{G+K}$ by properties of standardized variables.
\end{proposition}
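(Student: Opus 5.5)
The proof splits into three parts: the gradient difference, its vanishing on collapsed groups, and the bound on the augmented advantages.

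\textbf{The gradient difference.} In the standard policy-gradient (unclipped, on-policy) regime, $\rho_t^{(i)}=1$ at $\theta=\theta_{\text{old}}$ and the clip is inactive. Differentiating the surrogates in Eq.~\ref{eq:grpo_obj} and Eq.~\ref{eq:AVSPO_objective} then gives, for a fixed group,
\[
\frac{1}{G}\sum_{i=1}^{G}\hat{A}_i\nabla_\theta\log\pi_\theta(y^{(i)}|q)
\]
and the same expression with $\hat{A}_i'$, respectively. Here the per-token sum $\frac{1}{|y^{(i)}|}\sum_t\nabla\log\pi$ is identified with $\nabla_\theta\log\pi_\theta(y^{(i)}|q)$, following the convention of Eq.~\ref{eq:zero_gradient}.

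Two facts make the subtraction valid:
\begin{itemize}
\item The advantages are constants with respect to $\theta$, since they depend only on the rewards and the fixed virtual values.
\item The virtual samples carry no $\nabla_\theta\log\pi_\theta$ term.
\end{itemize}
Subtracting the two expressions therefore yields Eq.~\ref{eq:bias} directly, by linearity.

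\textbf{Collapsed groups.} If $\sigma_{\mathcal{R}}=0$, then Eq.~\ref{eq:collapse} gives $\hat{A}_i=0$ for every $i$. Hence $|\hat{A}_i'-\hat{A}_i|=|\hat{A}_i'|$.

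\textbf{Bound on $\hat{A}_i'$.} First, $\sigma_{\mathcal{R}'}>0$ after augmentation, which I would verify from Eq.~\ref{eq:virtual_rewards}:
\begin{itemize}
\item If $r_{\text{obs}}=1$, the virtual rewards are at most $K/(K+1)<1$.
\item If $r_{\text{obs}}=0$, they are at least $r_{\text{anchor}}/K>0$.
\end{itemize}
In both cases $\mathcal{R}'$ is non-constant. Since $\epsilon>0$ and $\sigma_{\mathcal{R}'}>0$,
\[
|\hat{A}_i'|=\frac{|r_i-\mu_{\mathcal{R}'}|}{\sigma_{\mathcal{R}'}+\epsilon}\le\frac{|r_i-\mu_{\mathcal{R}'}|}{\sigma_{\mathcal{R}'}}.
\]

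The main step is the standardized-variable bound. Let $M=G+K$ be the size of $\mathcal{R}'$ and $x$ any element of $\mathcal{R}'$. Then
\[
(x-\mu)^2\le\sum_{z\in\mathcal{R}'}(z-\mu)^2=M\sigma_{\mathcal{R}'}^2,
\]
so $|x-\mu|/\sigma_{\mathcal{R}'}\le\sqrt{M}=\sqrt{G+K}$. Each real reward $r_i$ belongs to $\mathcal{R}'$, so this applies to it.

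A sharper bound $\sqrt{M-1}$ follows from the fact that the deviations sum to zero, but the stated $\sqrt{G+K}$ is enough. The only delicate point is making sure the population (divide-by-$M$) standard deviation is used, consistent with Definition~\ref{def:acr}, so that the constant is exactly $\sqrt{M}$. Using $K\le G$ from Eq.~\ref{eq:num_virtual}, this further gives the uniform bound $\sqrt{2G}$.
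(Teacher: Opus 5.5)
Your argument is correct and is essentially the paper's own. The paper justifies the statement inline using three ingredients, which you reproduce: linearity of the unclipped, on-policy gradient in the advantages, $\hat{A}_i=0$ on collapsed groups, and the standardized-deviation bound $|x-\mu_{\mathcal{R}'}|\le\sqrt{G+K}\,\sigma_{\mathcal{R}'}$. Your explicit check that $\sigma_{\mathcal{R}'}>0$ fills in a step the paper leaves implicit.

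For comparison, the appendix (Lemma~\ref{lem:uniform_adv} and Proposition~\ref{prop:bounded_uniform_adv}) uses the explicit virtual rewards instead. It computes $\mu_{\mathcal{R}'}$ exactly and lower-bounds $\sigma_{\mathcal{R}'}$ via the between-component variance, obtaining the much sharper bound $|\hat{A}_i'|\le\sqrt{K/G}\le 1$.
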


For collapsed groups, AVSPO enables policy updates at the cost of introducing bias relative to the original gradients. This bias-variance tradeoff~\citep{greensmith2004variance} is favorable when the alternative is gradient stagnation.

The complete AVSPO training procedure is provided in Algorithm~\ref{alg:AVSPO} (Appendix~\ref{app:algorithm}). The algorithm integrates ACR monitoring with conditional virtual sample injection, requiring only minimal additions to standard GRPO training loops. AVSPO incurs no additional LLM forward passes and maintains $O(NG)$ per-iteration complexity, identical to vanilla GRPO.

\section{Experiments}
\label{sec:experiments}

We conduct comprehensive experiments to address three questions: (1) Does AVSPO effectively mitigate advantage collapse and improve reasoning performance? (2) Is ACR a reliable diagnostic for training efficiency? (3) How do individual design choices contribute to performance?

\subsection{Experimental Setup}
\begin{figure*}[ht]
  \begin{center}
    \centerline{\includegraphics[width=0.963\textwidth]{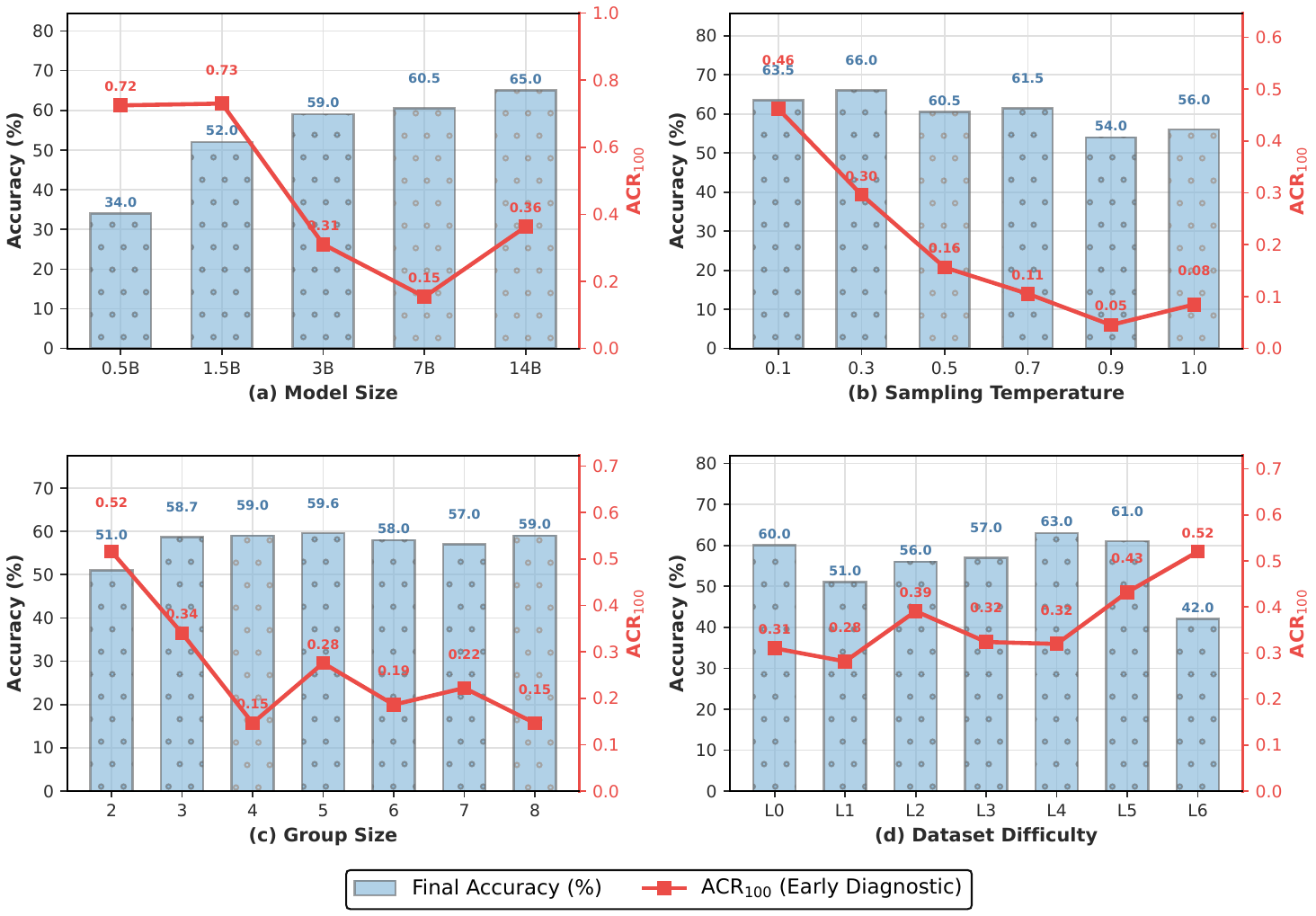}}
    \caption{\textbf{Sensitivity analysis of ACR across training conditions.} Blue bars show final accuracy (left axis); red line shows ACR$_{100}$ (right axis). (a) Larger models generally achieve lower ACR with higher accuracy, though not strictly monotonic. (b) Moderate temperatures ($T=0.3$--$0.5$) balance exploration and precision. (c) Increasing group size $G$ reduces ACR with diminishing returns. (d) Intermediate difficulty (L3--L4) yields optimal learning conditions. Note: ACR values here differ from Table~\ref{tab:main_results} as each subplot varies one hyperparameter at non-standard values for sensitivity analysis.}
    \label{fig:sensitivity}
  \end{center}
\end{figure*}
\textbf{Training Data.} We construct Level3-500 by sampling 500 intermediate-difficulty problems from the MATH training split~\citep{hendrycksmath2021}, targeting the optimal learning zone for gradient signal diversity~\citep{bengio2009curriculum,gao2025selectivedpo}. Details on difficulty-based selection are provided in Appendix~\ref{app:data_construction}.

\begin{table*}[t]
  \caption{Performance comparison across model scales and benchmarks. All methods trained for one epoch (500 steps) on Level3-500 with standard configuration ($G{=}8$, $T{=}1.0$) and evaluated using greedy decoding. We report pass@1 accuracy (\%) and average ACR (lower is better). Note: ACR values in Figure~\ref{fig:sensitivity} differ as they are computed under varied hyperparameter settings for sensitivity analysis. MMLU-Pro serves as out-of-domain evaluation to assess generalization beyond mathematical reasoning. Avg.\ is computed over all seven benchmarks. Best results per model in \textbf{bold}. Multi-seed validation ($n$=5) on representative configurations (4 models $\times$ 2 benchmarks) confirms statistical significance (Appendix~\ref{app:statistical}).}
  \label{tab:main_results}
  \centering
  \renewcommand{\arraystretch}{0.9}
  \scriptsize
  \setlength{\tabcolsep}{8pt}
  \begin{tabular}{l c cccccc c c}
  \toprule
  \textbf{Model / Method} & \textbf{ACR$\downarrow$} & \textbf{MATH} & \textbf{GSM8K} & \textbf{Minerva} & \textbf{Olympiad} & \textbf{AMC}$^\ddagger$ & \textbf{AIME}$^\ddagger$ & \textbf{MMLU-Pro} & \textbf{Avg.} \\
  \midrule
  \multicolumn{10}{l}{\textit{General Models}} \\
  \midrule
  Qwen2.5-0.5B[Base] & -- & 6.8 & 41.1 & 4.1 & 4.8 & 5.2 & 0.0 & 15.5 & 11.1 \\
  \quad + INTUITOR~\citep{intui} & -- & 21.5 & 28.4 & 7.3 & 10.2 & 12.1 & 1.2 & 14.9 & 13.7 \\
  \quad + RENT~\citep{rent} & -- & 19.8 & 24.1 & 6.2 & 8.9 & 10.5 & 0.8 & 14.6 & 12.1 \\
  \quad + DCPO~\citep{yang2025dcpodynamicclippingpolicy} & 0.38 & 26.3 & 37.8 & 10.5 & 14.2 & 16.8 & 2.4 & 15.6 & 17.7 \\
  \quad + GRPO~\citep{shao2024deepseekmath} & 0.45 & 24.6 & 35.2 & 9.8 & 13.5 & 15.3 & 2.1 & 15.1 & 16.5 \\
  \rowcolor{methodhighlight}
  \quad + \textbf{AVSPO (Ours)} & \textbf{0.18} & \textbf{31.4} & \textbf{44.8} & \textbf{13.2} & \textbf{17.9} & \textbf{20.1} & \textbf{3.5} & \textbf{16.0} & \textbf{21.0} \\
  \midrule
  Qwen2.5-3B[Base] & -- & 28.0 & 66.3 & 7.4 & 7.5 & 9.0 & 0.3 & 37.9 & 22.3 \\
  \quad + INTUITOR~\citep{intui} & -- & 32.4 & 45.8 & 12.6 & 18.7 & 21.3 & 4.1 & 37.1 & 24.6 \\
  \quad + RENT~\citep{rent} & -- & 29.7 & 41.5 & 11.1 & 16.9 & 19.2 & 3.3 & 36.5 & 22.6 \\
  \quad + DCPO~\citep{yang2025dcpodynamicclippingpolicy} & 0.29 & 38.5 & 55.2 & 16.1 & 23.8 & 26.4 & 6.5 & 38.0 & 29.2 \\
  \quad + GRPO~\citep{shao2024deepseekmath} & 0.37 & 36.8 & 52.6 & 15.3 & 22.4 & 24.8 & 5.9 & 37.4 & 27.9 \\
  \rowcolor{methodhighlight}
  \quad + \textbf{AVSPO (Ours)} & \textbf{0.14} & \textbf{42.7} & \textbf{61.3} & \textbf{18.9} & \textbf{26.5} & \textbf{29.5} & \textbf{7.8} & \textbf{38.6} & \textbf{32.2} \\
  \midrule
  Qwen2.5-3B-Instruct & -- & 63.0 & 76.5 & 25.7 & 24.4 & 27.4 & 5.0 & 37.7 & 37.1 \\
  \quad + INTUITOR~\citep{intui} & -- & 65.3 & 61.4 & 26.9 & 27.6 & 30.8 & 7.6 & 36.9 & 36.6 \\
  \quad + RENT~\citep{rent} & -- & 64.1 & 56.2 & 26.1 & 25.8 & 28.9 & 6.4 & 36.4 & 34.8 \\
  \quad + DCPO~\citep{yang2025dcpodynamicclippingpolicy} & 0.27 & 70.4 & 71.2 & 29.3 & 32.8 & 35.6 & 10.8 & 37.8 & 41.1 \\
  \quad + GRPO~\citep{shao2024deepseekmath} & 0.35 & 68.9 & 68.7 & 28.4 & 31.2 & 33.7 & 9.8 & 37.2 & 39.7 \\
  \rowcolor{methodhighlight}
  \quad + \textbf{AVSPO (Ours)} & \textbf{0.13} & \textbf{73.6} & \textbf{75.8} & \textbf{31.2} & \textbf{35.1} & \textbf{37.4} & \textbf{12.3} & \textbf{38.4} & \textbf{43.4} \\
  \midrule
  Qwen2.5-14B[Base] & -- & 68.4 & 89.2 & 28.5 & 38.2 & 42.1 & 16.8 & 56.7 & 48.6 \\
  \quad + INTUITOR~\citep{intui} & -- & 73.2 & 68.9 & 31.4 & 43.5 & 46.3 & 21.2 & 55.8 & 48.6 \\
  \quad + RENT~\citep{rent} & -- & 70.8 & 64.5 & 29.6 & 40.8 & 44.1 & 19.4 & 55.2 & 46.3 \\
  \quad + DCPO~\citep{yang2025dcpodynamicclippingpolicy} & 0.21 & 75.6 & 74.2 & 34.5 & 47.8 & 50.6 & 25.4 & 56.8 & 52.1 \\
  \quad + GRPO~\citep{shao2024deepseekmath} & 0.28 & 72.5 & 71.8 & 32.1 & 44.6 & 48.2 & 23.8 & 56.2 & 49.9 \\
  \rowcolor{methodhighlight}
  \quad + \textbf{AVSPO (Ours)} & \textbf{0.11} & \textbf{78.9} & \textbf{77.4} & \textbf{36.8} & \textbf{50.3} & \textbf{53.1} & \textbf{27.5} & \textbf{57.4} & \textbf{54.5} \\
  \midrule
  \multicolumn{10}{l}{\textit{Math-Specialized Models}} \\
  \midrule
  Qwen2.5-Math-1.5B[Base] & -- & 31.8 & 80.2 & 11.4 & 22.2 & 27.0 & 3.2 & 27.5 & 29.0 \\
  \quad + INTUITOR~\citep{intui} & -- & 52.4 & 43.6 & 19.7 & 31.4 & 35.2 & 8.5 & 26.8 & 31.1 \\
  \quad + RENT~\citep{rent} & -- & 47.8 & 38.4 & 17.2 & 28.5 & 32.8 & 6.9 & 26.3 & 28.3 \\
  \quad + DCPO~\citep{yang2025dcpodynamicclippingpolicy} & 0.31 & 61.4 & 52.6 & 22.8 & 33.5 & 39.2 & 11.8 & 27.6 & 35.6 \\
  \quad + GRPO~\citep{shao2024deepseekmath} & 0.40 & 58.6 & 49.8 & 19.2 & 31.7 & 37.6 & 10.6 & 27.0 & 33.5 \\
  \rowcolor{methodhighlight}
  \quad + \textbf{AVSPO (Ours)} & \textbf{0.15} & \textbf{67.2} & \textbf{59.3} & \textbf{28.9} & \textbf{37.8} & \textbf{41.6} & \textbf{14.2} & \textbf{28.2} & \textbf{39.6} \\
  \midrule
  Qwen2.5-Math-7B[Base] & -- & 60.8 & 86.3 & 20.2 & 30.4 & 35.0 & 13.3 & 39.4 & 40.8 \\
  \quad + INTUITOR~\citep{intui} & -- & 68.9 & 62.5 & 25.3 & 37.1 & 39.2 & 18.4 & 38.6 & 41.4 \\
  \quad + RENT~\citep{rent} & -- & 65.4 & 57.8 & 23.1 & 34.6 & 37.5 & 16.1 & 38.1 & 38.9 \\
  \quad + DCPO~\citep{yang2025dcpodynamicclippingpolicy} & 0.25 & 69.8 & 66.4 & 27.2 & 39.5 & 41.2 & 21.5 & 39.5 & 43.6 \\
  \quad + GRPO~\citep{shao2024deepseekmath} & 0.33 & 65.0 & 65.3 & 25.7 & 36.2 & 40.9 & 20.6 & 38.9 & 42.2 \\
  \rowcolor{methodhighlight}
  \quad + \textbf{AVSPO (Ours)} & \textbf{0.14} & \textbf{74.1} & \textbf{69.7} & \textbf{29.4} & \textbf{43.6} & \textbf{43.8} & \textbf{23.2} & \textbf{40.1} & \textbf{45.9} \\
  \bottomrule
  \end{tabular}
  \newline{$^\ddagger$ avg@32 to reduce variance from limited problem counts.}
  \end{table*}

\textbf{Evaluation Benchmarks.} We evaluate on six mathematical reasoning benchmarks spanning diverse difficulty levels: MATH-500~\citep{hendrycksmath2021}, GSM8K~\citep{gsm8k}, Minerva~\citep{minerva}, OlympiadBench~\citep{OlympiadBench}, AMC, and AIME24, covering elementary arithmetic to Olympiad-level competitions. To assess out-of-domain generalization, we additionally evaluate on MMLU-Pro~\citep{mmlu-pro}, a challenging multi-task language understanding benchmark that tests broader reasoning capabilities beyond mathematics.

\textbf{Models.} We experiment with six Qwen2.5 models~\citep{qwen2.5} spanning 0.5B to 14B parameters: Qwen2.5-0.5B, Qwen2.5-3B, Qwen2.5-3B-Instruct, Qwen2.5-Math-1.5B, Qwen2.5-Math-7B, and Qwen2.5-14B. All models are trained with group size $G=8$ and sampling temperature $T=1.0$. Training is conducted on 8$\times$NVIDIA A800-80GB GPUs using the TRL framework~\citep{vonwerra2020trl}, with one problem sampled per GPU per step. Complete hyperparameter settings are provided in Appendix~\ref{app:hyperparameters}.

\textbf{Baselines.} We compare against: (1) \textbf{Base Model} (pretrained without RL); (2) \textbf{Vanilla GRPO}~\citep{shao2024deepseekmath} with standard intra-group normalization; (3) \textbf{DCPO}~\citep{yang2025dcpodynamicclippingpolicy}, using dynamic clipping and smooth advantage standardization; (4) \textbf{INTUITOR}~\citep{intui}, an RLIF approach using self-certainty as reward; and (5) \textbf{RENT}~\citep{rent}, minimizing output entropy.

\textbf{Evaluation Protocol.} All benchmarks are evaluated using the lighteval library~\citep{lighteval}. We report pass@1 accuracy using greedy decoding ($T=0$). For competition benchmarks with limited problems (AMC: 25, AIME24: 30), we use avg@32 following~\citet{shao2024deepseekmath} to reduce variance from small sample sizes. Final answers are extracted via rule-based parsing and verified symbolically.

\textbf{Training and Reproducibility.} All models are trained for 500 steps. Multi-seed experiments ($n=5$) on representative configurations confirm statistical significance ($p < 0.05$) for all AVSPO improvements (Appendix~\ref{app:statistical}).

\subsection{Main Results}
Table~\ref{tab:main_results} presents results across all configurations.

\textbf{Main Findings.} AVSPO reduces ACR from 0.28--0.45 (GRPO) to 0.11--0.18, a 58--63\% relative reduction. This translates to +4 to +6 accuracy points over GRPO. Models with higher baseline ACR benefit more: Qwen2.5-Math-1.5B (ACR=0.40) shows the largest gain (+6.1\%), consistent with Proposition~\ref{prop:gradient_scaling}.

\textbf{Comparison with Baselines.} AVSPO outperforms DCPO (+2.9\%), INTUITOR (+6.8\%), and RENT (+8.9\%). INTUITOR uses internal confidence as the reward signal; RENT minimizes output entropy to encourage decisive reasoning. Both modify the reward signal but do not address advantage diversity. DCPO targets policy divergence through dynamic clipping rather than reward diversity. These results suggest that restoring batch-level gradient signal is more effective than reward engineering given verification.

\textbf{Analysis Across Scales and Difficulty.} The improvement pattern varies systematically. On moderate-difficulty benchmarks (GSM8K, MATH-500), AVSPO achieves the largest gains. On competition-level problems (AMC, AIME), gains are more modest, suggesting model capacity becomes the limiting factor. Across model scales, larger models have lower baseline ACR (0.21 for 14B vs.\ 0.45 for 0.5B) and show smaller but consistent improvements. The correlation between ACR reduction and accuracy gains (Figure~\ref{fig:teaser}) indicates that advantage collapse is a primary bottleneck in GRPO training.

\subsection{Validating ACR as a Diagnostic Metric}
\label{subsec:acr_validation}

To establish ACR as a reliable diagnostic beyond its role in AVSPO, we conduct systematic validation across factors that influence reward diversity: problem difficulty, model capacity, sampling temperature, and group size.

\subsubsection{ACR Predicts Final Performance}

We examine whether early-stage ACR measurements can forecast training outcomes.~Across 245 setups (5 model scales $\times$ 7 difficulty levels $\times$ 7 group sizes), we compute the mean ACR over the first 100 training steps (ACR$_{100}$) and correlate it with final accuracy on MATH-500. As shown in Figure~\ref{fig:teaser}, the analysis reveals a strong negative correlation ($r = -0.785$, $p < 10^{-8}$). An ordinary least squares fit yields:
\begin{equation}
\text{Final Accuracy} = 51.4 - 29.6 \times \text{ACR}_{100}
\end{equation}
with $R^2 = 0.617$. Thus, ACR$_{100}$ explains 62\% of final performance variance, enabling early detection of poor configurations.

\subsubsection{Sensitivity to Training Conditions}

We investigate how ACR responds to four key factors: model capacity, sampling temperature, group size, and problem difficulty (Figure~\ref{fig:sensitivity}).
Key findings: (1) larger models achieve lower ACR, though the 7B model outperforms 14B due to task-model matching effects; (2) temperature exhibits an optimal range at $T=0.3$--$0.5$, where \textit{low ACR is necessary but not sufficient}---balancing exploration and exploitation; (3) group size $G \in [6, 8]$ provides the best trade-off between gradient effectiveness and computational cost; (4) problem difficulty shows a U-shaped relationship with ACR, as both easy and hard problems cause uniform rewards, validating our choice of intermediate-difficulty training data.
These results establish ACR as a reliable real-time diagnostic with zero computational overhead. Detailed per-factor analysis is provided in Appendix~\ref{app:sensitivity_details}.

\subsection{Ablation Studies}
\label{subsec:ablations}

We ablate key design choices on Qwen2.5-Math-1.5B to validate AVSPO's components. Table~\ref{tab:ablation} summarizes the virtual sample construction strategies.

\textbf{Virtual Sample Construction.} We compare five strategies: (1) no augmentation (GRPO baseline); (2) random uniform sampling $r_v \sim U[0, r_{\max}]$; (3) fixed partial credit ($r_v = 0.5 \cdot r_{\max}$); (4) exponential decay ($r_v^k = r_{\max} e^{-0.5k}$); and (5) stratified assignment (ours). As shown in Table~\ref{tab:ablation}, random sampling reduces ACR but introduces variance, yielding only +3.5\% improvement. Fixed partial credit achieves +4.9\% but fails to span the reward spectrum. AVSPO's stratified approach achieves both lowest ACR (0.15) and highest accuracy (+8.6\%), validating the importance of structured reward diversity.

\begin{table}[h]
  \caption{Ablation study on virtual sample construction strategies. All experiments use Qwen2.5-Math-1.5B trained for 500 steps. GRPO serves as the baseline.}
  \label{tab:ablation}
  \centering
  \small
  \setlength{\tabcolsep}{8pt}
  \begin{tabular}{lcc}
  \toprule
  \textbf{Strategy} & \textbf{ACR ($\downarrow$)} & \textbf{MATH-500 ($\uparrow$)} \\
  \midrule
  No augmentation (GRPO) & 0.40 & 58.6 \\
  Random uniform & 0.22 {\scriptsize\textcolor{red}{--0.18}} & 62.1 {\scriptsize\textcolor{blue}{+3.5}} \\
  Fixed partial ($r_v=0.5$) & 0.19 {\scriptsize\textcolor{red}{--0.21}} & 63.5 {\scriptsize\textcolor{blue}{+4.9}} \\
  Exponential decay & 0.18 {\scriptsize\textcolor{red}{--0.22}} & 64.2 {\scriptsize\textcolor{blue}{+5.6}} \\
  \rowcolor{methodhighlight}
  \textbf{Stratified (Ours)} & \textbf{0.15} {\scriptsize\textcolor{red}{--0.25}} & \textbf{67.2} {\scriptsize\textcolor{blue}{+8.6}} \\
  \bottomrule
  \end{tabular}
  \end{table}

\textbf{Mechanism Isolation.} We further separate the two collapsed-group cases targeted by AVSPO. Error-Only augments only all-incorrect groups, while Correct-Only augments only all-correct groups. Table~\ref{tab:mechanism_isolation} shows that both interventions are independently useful: Error-Only mainly suppresses all-wrong collapse, Correct-Only mainly suppresses all-correct collapse, and the full method combines both effects for the largest gain.

\begin{table}[h]
  \caption{Mechanism isolation on Qwen2.5-Math-1.5B trained for 500 steps on MATH-500 over 5 seeds.}
  \label{tab:mechanism_isolation}
  \centering
  \small
  \setlength{\tabcolsep}{4pt}
  \begin{tabular}{lccc}
  \toprule
  \textbf{Method} & \textbf{Acc.} & \textbf{All-Wrong} & \textbf{All-Correct} \\
  \midrule
  GRPO & 58.6$\pm$1.4 & 24.8\% & 15.2\% \\
  Error-Only & 63.2$\pm$1.3 & 9.1\% & 14.5\% \\
  Correct-Only & 60.8$\pm$1.5 & 23.6\% & 4.2\% \\
  \rowcolor{methodhighlight}
  \textbf{Full AVSPO} & \textbf{67.2$\pm$1.2} & \textbf{8.7\%} & \textbf{6.3\%} \\
  \bottomrule
  \end{tabular}
  \end{table}

\textbf{Adaptive vs.\ Fixed Thresholding.} We compare three fixed thresholds ($\tau \in \{0.3, 0.5, 0.7\}$) against our adaptive mechanism (Table~\ref{tab:threshold}). Fixed thresholds face a fundamental trade-off: low values ($\tau=0.3$) trigger excessive intervention with high variance; high values ($\tau=0.7$) miss early-stage collapse. The optimal fixed threshold ($\tau=0.5$) requires 380 steps to reach 60\% accuracy, while adaptive thresholding achieves this in 295 steps (22\% faster) with 1.8\% higher final accuracy. The adaptive mechanism adjusts based on training progress, avoiding both over- and under-intervention.

  \begin{table}[h]
    \caption{Comparison of fixed vs.\ adaptive thresholding on Qwen2.5-Math-1.5B. ``Steps to 60\%'' denotes training steps to reach 60\% accuracy on MATH-500.}
    \label{tab:threshold}
    \centering
    \small
    \setlength{\tabcolsep}{3pt}
    \begin{tabular}{lccc}
    \toprule
    \textbf{Threshold} & \textbf{Steps to 60\%} & \textbf{Final Acc.} & \textbf{Stability} \\
    \midrule
    Fixed $\tau=0.3$ & 420 & 64.8 & High variance \\
    Fixed $\tau=0.5$ & 380 & 65.4 & Moderate \\
    Fixed $\tau=0.7$ & 350 & 63.9 & Under-intervene \\
    \rowcolor{methodhighlight}
    \textbf{Adaptive (Ours)} & \textbf{295} & \textbf{67.2} & \textbf{Stable} \\
    \bottomrule
    \end{tabular}
    \end{table}

\textbf{Sensitivity Parameter $\alpha$.} The number of virtual samples scales as $K \propto \text{ACR}^\alpha$. We find $\alpha = 0.5$ optimal: smaller values (0.3) under-augment, leaving residual collapse (ACR$=0.24$); larger values (1.0) over-augment, diluting real gradient signals (accuracy drops 2.1\%). The square-root scaling provides appropriate adaptive strength.

\textbf{Robustness to Hyperparameters.} AVSPO shows stable performance across parameter ranges. Varying $\tau_{\text{adapt}}^{(0)} \in [0.3, 0.7]$ and $\eta \in [0.005, 0.02]$ yields less than 1\% accuracy variation, indicating robustness without extensive tuning.

\subsection{Filtering versus Augmentation}
\label{subsec:filtering_augmentation}

We finally compare two paradigms for handling collapsed groups: removing them from the update or augmenting their reward statistics. Filtering avoids zero-advantage updates, but the collapsed groups have already consumed rollout computation; dropping them leaves those samples unused, while resampling adds extra rollouts. AVSPO keeps the original generated samples and modifies only the normalization statistics, allowing these groups to provide a learning signal with negligible additional cost.

\begin{table}[h]
  \caption{Filtering vs.\ augmentation on Qwen2.5-Math-7B.}
  \label{tab:filtering_augmentation}
  \centering
  \small
  \setlength{\tabcolsep}{3pt}
  \begin{tabular}{lcccc}
  \toprule
  \textbf{Method} & \textbf{GSM8K} & \textbf{MATH} & \textbf{Util.} & \textbf{Cost} \\
  \midrule
  GRPO & 65.3$\pm$1.8 & 65.0$\pm$1.4 & 100\% & 1.0$\times$ \\
  Filter-Drop & 66.2$\pm$1.5 & 67.1$\pm$1.3 & 62.4\% & 1.0$\times$ \\
  DAPO & 65.8$\pm$1.7 & 68.3$\pm$1.6 & 54.4\% & 1.8$\times$ \\
  \rowcolor{methodhighlight}
  \textbf{AVSPO (Ours)} & \textbf{69.7$\pm$1.4} & \textbf{74.1$\pm$1.2} & \textbf{100\%} & \textbf{1.0$\times$} \\
  \bottomrule
  \end{tabular}
  \end{table}

Table~\ref{tab:filtering_augmentation} reports a controlled comparison on MATH-Level3-500 with 500 training steps, $G=8$, $T=1.0$, and 3 seeds. Filter-Drop improves over GRPO but uses only 62.4\% of generated samples. DAPO's dynamic-sampling strategy~\citep{yu2025dapo} improves MATH accuracy but raises rollout cost to 1.8$\times$. AVSPO obtains the strongest result while retaining 100\% sample utilization and 1.0$\times$ cost, suggesting that repairing collapsed groups is a more efficient use of generated trajectories than filtering them out.

\section{Conclusion}

\label{sec:conclusion}
We have identified \emph{advantage collapse}, a failure mode in GRPO where homogeneous within-group rewards cause vanishing gradients, as a fundamental bottleneck in reinforcement learning for LLMs. To diagnose this, we introduced the ACR, a zero-overhead metric that quantifies gradient ineffectiveness and predicts final performance. Guided by real-time ACR monitoring, our AVSPO algorithm adaptively injects virtual reward samples to enable learning from collapsed groups, achieving 58--63\% reduction in collapse rate and consistent accuracy improvements across 0.5B--14B models on mathematical reasoning benchmarks. We hope ACR becomes a standard diagnostic in binary-reward RLVR pipelines with deterministic verifiers, enabling more transparent and efficient training in this setting.

\section*{Acknowledgments}
This work was supported by the National Natural Science Foundation of China under Grant 62306331, 62407037, and CAAI Youth Talent Lifting Project under Grant CAAI2023-2025QNRC001.

\section*{Impact Statement}
This work aims to advance the field of RLVR for LLMs by providing diagnostic tools and algorithmic improvements for more stable and efficient training.~ACR enables practitioners to identify training inefficiencies in real-time, potentially reducing computational waste and its associated energy footprint, and improving transparency in LLM alignment pipelines.~We hope that our contributions will facilitate more efficient development of reasoning capabilities in language models and encourage further research on training diagnostics for policy optimization.

\bibliography{example_paper}
\bibliographystyle{icml2026}

\newpage
\appendix
\onecolumn

\section{Implementation Details}
\label{app:hyperparameters}

Table~\ref{tab:hyperparameters} summarizes all hyperparameter settings used in our experiments.

\begin{table}[!htb]
\caption{Complete hyperparameter settings for all experiments.}
\label{tab:hyperparameters}
\centering
\begin{tabular}{ll}
\toprule
\textbf{Hyperparameter} & \textbf{Value} \\
\midrule
\multicolumn{2}{l}{\textit{Hardware \& Software}} \\
GPUs & 8$\times$NVIDIA A800-80GB \\
Framework & TRL~\citep{vonwerra2020trl} \\
\midrule
\multicolumn{2}{l}{\textit{Sampling}} \\
Group size $G$ & 8 \\
Sampling temperature (training) & 1.0 \\
Sampling temperature (evaluation) & 0 (greedy) \\
Max sequence length & 1,024 tokens \\
\midrule
\multicolumn{2}{l}{\textit{Optimization}} \\
Optimizer & AdamW \\
Learning rate $\eta_\theta$ & $10^{-6}$ \\
$(\beta_1, \beta_2)$ & (0.9, 0.999) \\
Weight decay & 0.01 \\
Gradient clipping norm & 1.0 \\
Training steps & 500 \\
Batch size per GPU & 1 \\
\midrule
\multicolumn{2}{l}{\textit{GRPO/AVSPO}} \\
KL penalty $\beta_{\text{KL}}$ & 0 \\
Clipping range $\varepsilon$ & 0.2 \\
\midrule
\multicolumn{2}{l}{\textit{AVSPO-Specific}} \\
Initial threshold $\tau_{\text{adapt}}^{(0)}$ & 0.5 \\
Sensitivity $\alpha$ & 0.5 \\
Threshold learning rate $\eta$ & 0.01 \\
Collapse threshold $\tau$ & $10^{-6}$ \\
Threshold bounds $[\tau_{\min}, \tau_{\max}]$ & [0.1, 0.9] \\
Anchor reward $r_{\text{anchor}}$ & 0.1 \\
\bottomrule
\end{tabular}
\end{table}

\textbf{Training Time.} Training a 1.5B model for 500 steps takes approximately 2.5 hours wall-clock time; 7B models require 4 hours; 14B models require approximately 8 hours.

\subsection{Training Data Construction}
\label{app:data_construction}

We construct Level3-500 through difficulty-based selection using Qwen2.5-Math-1.5B as the selector model, inspired by curriculum learning principles~\citep{bengio2009curriculum} and recent findings that overly difficult examples hinder alignment~\citep{hemmat2025deliberatepractice}. Specifically, we partition the MATH training split~\citep{hendrycksmath2021} into seven difficulty levels (Level 0--6) based on the selector's success rate: Level 0 (95\% accuracy), Level 1 (80\%), Level 2 (50--80\%), Level 3 (30--50\%), Level 4 (10--30\%), Level 5 (1--10\%), and Level 6 ($<$1\%). From Level 3, we randomly sample 500 problems, targeting the optimal learning zone where the model has sufficient but not excessive success rate to generate diverse reward signals. Note that Level3-500 is drawn exclusively from the \emph{training} split, while our evaluation benchmark MATH-500 uses the official \emph{test} split, ensuring no data leakage.

\textbf{Training Protocol.} All models are trained for one epoch (500 steps), justified by learning curves plateauing within 400 steps (see Section~\ref{app:training_curves}). As shown in Figures~\ref{fig:app_model_size}--\ref{fig:app_difficulty}, both ACR and accuracy stabilize well before 500 steps across all configurations, confirming that 500 problems provide sufficient training signal for convergence. We report single-run results for the full experimental matrix (6 models $\times$ 7 benchmarks $\times$ 5 methods) due to computational cost. To validate robustness, we conduct multi-seed experiments ($n=5$) on four representative models (Qwen2.5-0.5B, Qwen2.5-Math-1.5B, Qwen2.5-3B, and Qwen2.5-14B) on MATH-500 and GSM8K.

\section{Proofs}
\label{app:proofs}

\begin{proof}[Proof of Proposition~\ref{prop:gradient_scaling}]
By definition, $\hat{A}_i = (r_i - \mu_{\mathcal{R}})/(\sigma_{\mathcal{R}} + \epsilon)$. Thus:
\begin{equation*}
\sum_{i=1}^{G} \hat{A}_i^2 = \sum_{i=1}^{G} \frac{(r_i - \mu_{\mathcal{R}})^2}{(\sigma_{\mathcal{R}} + \epsilon)^2} = \frac{G \sigma_{\mathcal{R}}^2}{(\sigma_{\mathcal{R}} + \epsilon)^2}
\end{equation*}
using the definition $\sigma_{\mathcal{R}}^2 = \frac{1}{G}\sum_i (r_i - \mu_{\mathcal{R}})^2$.
\end{proof}

\subsection{Virtual Samples in Homogeneous Groups: Directionality and Stability}
\label{app:virtual_samples_theory}

This subsection provides a deeper analysis of the bias and stability properties introduced by the
virtual reward samples in \Cref{eq:virtual_rewards,eq:augmented_advantage}, focusing on the
two degenerate (collapsed) cases where all rollouts in a group are either incorrect or correct.

\paragraph{Setup.}
Fix a prompt $q$ (corresponding to some group $j$). Let $\mathcal{Y}(q)$ denote the space of all trajectories (responses) that the policy can generate.
Define the \emph{success} and \emph{failure} subsets
\begin{equation}
\mathcal{S}_q := \{y \in \mathcal{Y}(q): r(q,y)=1\},\qquad
\mathcal{F}_q := \{y \in \mathcal{Y}(q): r(q,y)=0\}.
\end{equation}
We denote the success probability by
\begin{equation}
p_\theta(q) := \pi_\theta(\mathcal{S}_q \mid q),\qquad 1-p_\theta(q) = \pi_\theta(\mathcal{F}_q \mid q).
\label{eq:success_prob}
\end{equation}

\paragraph{Notational convention.}
Throughout this appendix, we fix a single group $j$ and suppress the group index when the context is unambiguous, writing $\mathcal{R}$, $\sigma_{\mathcal{R}}$, $\mu_{\mathcal{R}}$ instead of $\mathcal{R}_j$, $\sigma_{\mathcal{R}_j}$, $\mu_{\mathcal{R}_j}$. When augmented reward sets are involved, we write $\mathcal{R}'$, $\mu_{\mathcal{R}'}$, $\sigma_{\mathcal{R}'}$ for the quantities after virtual sample injection. The subscript $j$ is retained where explicitly needed for clarity.

\begin{lemma}[Conditional score identity]
\label{lem:cond_score}
Let $\mathcal{E}\subseteq \mathcal{Y}(q)$ be any measurable set such that $\pi_\theta(\mathcal{E}\mid q)>0$.
Then,
\begin{equation}
\mathbb{E}_{y\sim \pi_\theta(\cdot \mid q,\; y\in\mathcal{E})}\!\big[\nabla_\theta \log \pi_\theta(y\mid q)\big]
= \nabla_\theta \log \pi_\theta(\mathcal{E}\mid q).
\label{eq:cond_score}
\end{equation}
\end{lemma}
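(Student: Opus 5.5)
The plan is to write the conditional distribution explicitly and reduce the identity to the log-derivative trick applied to the restricted measure. On a countable trajectory space (which is the case for autoregressive generation over a finite vocabulary with bounded length; the general measurable case is analogous with densities), the conditional law is
\begin{equation*}
\pi_\theta(y\mid q,\,y\in\mathcal{E}) = \frac{\pi_\theta(y\mid q)\,\mathbb{I}(y\in\mathcal{E})}{\pi_\theta(\mathcal{E}\mid q)}, \qquad \pi_\theta(\mathcal{E}\mid q)=\sum_{y\in\mathcal{E}}\pi_\theta(y\mid q).
\end{equation*}
This form is well defined because $\pi_\theta(\mathcal{E}\mid q)>0$ by hypothesis. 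Substituting it into the left-hand side of \eqref{eq:cond_score} gives
\begin{equation*}
\frac{1}{\pi_\theta(\mathcal{E}\mid q)}\sum_{y\in\mathcal{E}} \pi_\theta(y\mid q)\,\nabla_\theta\log\pi_\theta(y\mid q).
\end{equation*}

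Next I use $\pi_\theta\nabla_\theta\log\pi_\theta=\nabla_\theta\pi_\theta$. This holds for every $y$ with $\pi_\theta(y\mid q)>0$, which covers everything in the support; terms with zero probability contribute nothing to the expectation. The sum therefore becomes $\sum_{y\in\mathcal{E}}\nabla_\theta\pi_\theta(y\mid q)$. I then pull the gradient outside the sum to obtain $\nabla_\theta\pi_\theta(\mathcal{E}\mid q)$. Dividing by $\pi_\theta(\mathcal{E}\mid q)$ gives $\nabla_\theta\log\pi_\theta(\mathcal{E}\mid q)$ by the chain rule, which is again valid because the probability is positive.

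The main obstacle is justifying the interchange of gradient and summation. For a finite trajectory space (finite vocabulary, maximum length, as in our setting with 1{,}024 tokens) the sum is finite and the interchange is trivial, and I will state the lemma under this assumption. For the general case I would instead assume $\pi_\theta(y\mid q)$ is differentiable in $\theta$ and that $\sum_y \sup_{\theta'\in U}\|\nabla_\theta\pi_{\theta'}(y\mid q)\|<\infty$ on a neighborhood $U$ of $\theta$. Dominated convergence then permits differentiating under the sum.

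As a sanity check, taking $\mathcal{E}=\mathcal{Y}(q)$ recovers the standard identity $\mathbb{E}[\nabla_\theta\log\pi_\theta]=0$. In later use, $\mathcal{E}=\mathcal{S}_q$ or $\mathcal{F}_q$ will give $\nabla_\theta\log p_\theta(q)$ and $\nabla_\theta\log(1-p_\theta(q))$ respectively.
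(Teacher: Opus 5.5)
Your proof is correct and is essentially the paper's argument: write the conditional law as $\pi_\theta(y\mid q)\,\mathbb{I}(y\in\mathcal{E})/Z_\theta$ with $Z_\theta=\pi_\theta(\mathcal{E}\mid q)$, apply $\pi_\theta\nabla_\theta\log\pi_\theta=\nabla_\theta\pi_\theta$, pull the gradient out of the sum, and recognize $\nabla_\theta Z_\theta/Z_\theta=\nabla_\theta\log Z_\theta$. Your remarks on the finite trajectory space, which justifies the gradient--sum interchange, are extra rigor the paper leaves implicit. On the zero-probability terms, say explicitly why they can be put back into $\sum_{y\in\mathcal{E}}\nabla_\theta\pi_\theta(y\mid q)$: a differentiable nonnegative function has zero gradient wherever it vanishes.
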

\begin{proof}
Let $Z_\theta := \pi_\theta(\mathcal{E}\mid q)=\sum_{y\in\mathcal{E}} \pi_\theta(y\mid q)$.
Using $\nabla_\theta \log \pi_\theta(y\mid q)=\nabla_\theta \pi_\theta(y\mid q)/\pi_\theta(y\mid q)$, we have
\begin{align*}
\mathbb{E}_{y\sim \pi_\theta(\cdot \mid q,\; y\in\mathcal{E})}\!\big[\nabla_\theta \log \pi_\theta(y\mid q)\big]
&= \frac{1}{Z_\theta}\sum_{y\in\mathcal{E}} \pi_\theta(y\mid q)\,\nabla_\theta \log \pi_\theta(y\mid q) \\
&= \frac{1}{Z_\theta}\sum_{y\in\mathcal{E}} \nabla_\theta \pi_\theta(y\mid q)
= \frac{\nabla_\theta Z_\theta}{Z_\theta}
= \nabla_\theta \log Z_\theta,
\end{align*}
which proves \Cref{eq:cond_score}.
\end{proof}

\begin{lemma}[Uniform advantages for collapsed binary-reward groups]
\label{lem:uniform_adv}
Assume binary rewards $r(q,y)\in\{0,1\}$ and consider a \emph{collapsed} group of size $G$ where
$r(q,y^{(i)})=c$ for all $i\in\{1,\dots,G\}$ and some $c\in\{0,1\}$.
Under AVSPO, the recomputed advantages in \Cref{eq:augmented_advantage} satisfy
\begin{equation}
\hat{A}_j^{(i)\prime} \equiv \hat{A}_c^{(K)} := \frac{c-\mu_{\mathcal{R}_j'}}{\sigma_{\mathcal{R}_j'}+\epsilon},
\qquad \forall i\in\{1,\dots,G\},
\label{eq:const_adv}
\end{equation}
i.e., all $G$ real samples share a \emph{common} advantage sign in the collapsed cases.

Moreover, for the stratified virtual rewards in \Cref{eq:virtual_rewards} (with $K\ge 1$) the augmented means are
\begin{equation}
\mu_{\mathcal{R}_j'} =
\begin{cases}
\displaystyle \frac{G+\frac{K}{2}}{G+K}, & c=1,\\[8pt]
\displaystyle \frac{r_{\text{anchor}}(K+1)}{2(G+K)}, & c=0,
\end{cases}
\label{eq:collapsed_mean}
\end{equation}
and the augmented standard deviations admit the lower bounds
\begin{equation}
\sigma_{\mathcal{R}_j'} \ge
\begin{cases}
\displaystyle \frac{\sqrt{GK}}{2(G+K)}, & c=1,\\[10pt]
\displaystyle \frac{r_{\text{anchor}}(K+1)}{2(G+K)}\sqrt{\frac{G}{K}}, & c=0.
\end{cases}
\label{eq:collapsed_std_lb}
\end{equation}
\end{lemma}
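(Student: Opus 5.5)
The plan is to work directly from the definitions in \Cref{eq:virtual_rewards,eq:augmented_advantage}, assuming augmentation is triggered for the collapsed group so that $\mathcal{R}_j'=\mathcal{R}_j\cup\mathcal{V}_j$ with $|\mathcal{R}_j'|=G+K$.

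\textbf{Step 1: uniform advantages.} Since $r_j^{(i)}=c$ for every real sample, and $\mu_{\mathcal{R}_j'}$ and $\sigma_{\mathcal{R}_j'}$ are group-level quantities, \Cref{eq:augmented_advantage} gives the same value $(c-\mu_{\mathcal{R}_j'})/(\sigma_{\mathcal{R}_j'}+\epsilon)$ for all $i$. This proves \Cref{eq:const_adv}. For the common-sign remark I will check that $c-\mu_{\mathcal{R}_j'}\neq 0$.
\begin{itemize}
\item For $c=1$, every virtual reward is strictly below $1$, so $\mu_{\mathcal{R}_j'}<1$ and the advantage is positive.
\item For $c=0$, every virtual reward is strictly positive, so $\mu_{\mathcal{R}_j'}>0$ and the advantage is negative.
\end{itemize}
These strict inequalities will also follow from the explicit means in Step 2.

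\textbf{Step 2: the means.} I will sum the stratified virtual rewards in closed form. Here $c=1$ means $r_{\text{obs}}=1$, and $c=0$ means $r_{\text{obs}}=0$.
\begin{itemize}
\item For $c=1$: $\sum_{k=1}^K\bigl(1-\tfrac{k}{K+1}\bigr)=K-\tfrac{K(K+1)}{2(K+1)}=\tfrac K2$. Adding the real rewards gives a total of $G+\tfrac K2$.
\item For $c=0$: $\sum_{k=1}^K r_{\text{anchor}}\tfrac{K-k+1}{K}=\tfrac{r_{\text{anchor}}}{K}\cdot\tfrac{K(K+1)}{2}=\tfrac{r_{\text{anchor}}(K+1)}{2}$. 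The real rewards contribute $0$.
\end{itemize}
Dividing each total by $G+K$ yields \Cref{eq:collapsed_mean}.

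\textbf{Step 3: the standard-deviation bounds.} This is the only step needing an idea, and I expect it to be the main (mild) obstacle. I will apply the law of total variance to the two blocks of $\mathcal{R}_j'$: the $G$ real rewards, all equal to $c$, and the $K$ virtual rewards, with block mean $m_v$. The population variance of the pooled set equals the weighted within-block variances plus the between-block term $\tfrac{GK}{(G+K)^2}(c-m_v)^2$. Dropping the nonnegative within-block part gives
\[
\sigma_{\mathcal{R}_j'}\ \ge\ \frac{\sqrt{GK}}{G+K}\,|c-m_v|.
\]
From Step 2, $m_v=\tfrac12$ when $c=1$, so $|c-m_v|=\tfrac12$ and the bound becomes $\sqrt{GK}/(2(G+K))$. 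When $c=0$, $m_v=r_{\text{anchor}}(K+1)/(2K)$, and substituting gives
\[
\frac{r_{\text{anchor}}(K+1)}{2K}\cdot\frac{\sqrt{GK}}{G+K}=\frac{r_{\text{anchor}}(K+1)}{2(G+K)}\sqrt{\frac GK}.
\]
Both cases match \Cref{eq:collapsed_std_lb}.

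The only care needed is to use the same $1/(G+K)$ population normalization as in Definition~\ref{def:acr} when writing the variance decomposition. The bound is tight exactly when the virtual rewards are constant, which happens only for $K=1$.
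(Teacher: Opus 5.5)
Your proposal is correct and follows the paper's proof essentially step for step: it uses the constancy of the numerator $c-\mu_{\mathcal{R}_j'}$, the same closed-form sums of the stratified virtual rewards, and the same between-component (law of total variance) lower bound on $\sigma_{\mathcal{R}_j'}$. Your explicit sign check, which the paper defers to a later theorem, and your remark that the bound is tight only for $K=1$ are small, correct additions.
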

\begin{proof}
In a collapsed binary-reward group, all observed rewards equal $c$, hence $\{r_j^{(i)}\}_{i=1}^G$ are identical.
Therefore, in \Cref{eq:augmented_advantage} each numerator $r_j^{(i)}-\mu_{\mathcal{R}_j'}$ is identical,
which proves the constancy claim \Cref{eq:const_adv}.

We next compute $\mu_{\mathcal{R}_j'}$ in the two collapsed cases.
If $c=1$, then $r_{\text{obs}}=\max(\mathcal{R}_j)=1$ and \Cref{eq:virtual_rewards} gives
$r_{v_k}=1-\frac{k}{K+1}=\frac{K+1-k}{K+1}$.
Thus $\sum_{k=1}^K r_{v_k} = \sum_{m=1}^K \frac{m}{K+1} = \frac{K}{2}$, yielding
$\mu_{\mathcal{R}_j'} = \frac{G\cdot 1 + \frac{K}{2}}{G+K}$.
If $c=0$, then $r_{\text{obs}}=0$ and \Cref{eq:virtual_rewards} gives $r_{v_k}=r_{\text{anchor}}\frac{K-k+1}{K}$.
Hence $\sum_{k=1}^K r_{v_k} = \frac{r_{\text{anchor}}}{K}\sum_{m=1}^K m = \frac{r_{\text{anchor}}(K+1)}{2}$,
and $\mu_{\mathcal{R}_j'}=\frac{\frac{r_{\text{anchor}}(K+1)}{2}}{G+K}$.
This proves \Cref{eq:collapsed_mean}.

For the lower bounds on $\sigma_{\mathcal{R}_j'}$, we use that the total variance is at least the
\emph{between-component} variance of a two-component mixture.
In the $c=1$ case, all $G$ real rewards equal $1$ and the $K$ virtual rewards have mean $1/2$,
so the between-component variance is
$\frac{G}{G+K}\cdot\frac{K}{G+K}\cdot(1-\tfrac12)^2=\frac{GK}{4(G+K)^2}$,
implying $\sigma_{\mathcal{R}_j'}\ge \frac{\sqrt{GK}}{2(G+K)}$.
In the $c=0$ case, the real rewards equal $0$ and the virtual rewards have mean
$\mu_v = \frac{1}{K}\sum_{k=1}^K r_{v_k}=\frac{r_{\text{anchor}}(K+1)}{2K}$,
so the between-component variance is
$\frac{G}{G+K}\cdot\frac{K}{G+K}\cdot \mu_v^2
= \frac{GK}{(G+K)^2}\cdot \frac{r_{\text{anchor}}^2(K+1)^2}{4K^2}$,
implying $\sigma_{\mathcal{R}_j'} \ge \frac{r_{\text{anchor}}(K+1)}{2(G+K)}\sqrt{\frac{G}{K}}$.
This proves \Cref{eq:collapsed_std_lb}.
\end{proof}

\begin{proposition}[Bounded magnitude of uniform updates]
\label{prop:bounded_uniform_adv}
Under the assumptions of \Cref{lem:uniform_adv} and $1\le K \le G$, the common advantage magnitude is bounded by
\begin{equation}
\big|\hat{A}_c^{(K)}\big|
\le \sqrt{\frac{K}{G}}
\le 1.
\label{eq:uniform_adv_bound}
\end{equation}
\end{proposition}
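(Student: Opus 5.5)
The plan is to bound the common advantage $\hat{A}_c^{(K)}$ from \Cref{lem:uniform_adv} directly, by comparing its numerator $|c-\mu_{\mathcal{R}_j'}|$ with the lower bound on $\sigma_{\mathcal{R}_j'}$ in \Cref{eq:collapsed_std_lb}. First I drop the stability constant: since $\epsilon>0$,
\[
|\hat{A}_c^{(K)}| \le \frac{|c-\mu_{\mathcal{R}_j'}|}{\sigma_{\mathcal{R}_j'}},
\]
so it suffices to bound this ratio by $\sqrt{K/G}$.

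I will treat the augmented set $\mathcal{R}_j'$ as a two-component mixture. One component is the $G$ real rewards, all equal to $c$, with weight $p=G/(G+K)$. The other is the $K$ virtual rewards, with mean $\mu_v$ and weight $q=K/(G+K)$. Then $\mu_{\mathcal{R}_j'}=pc+q\mu_v$, which gives
\[
|c-\mu_{\mathcal{R}_j'}| = q\,|c-\mu_v|.
\]
Exactly as in the proof of \Cref{lem:uniform_adv}, the total variance is at least the between-component variance $pq(c-\mu_v)^2$. Hence $\sigma_{\mathcal{R}_j'}\ge\sqrt{pq}\,|c-\mu_v|$.

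Provided $c\neq\mu_v$, the factor $|c-\mu_v|$ cancels and the ratio is at most $q/\sqrt{pq}=\sqrt{q/p}=\sqrt{K/G}$. The assumption $K\le G$ then gives the final $\le 1$.

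The only point needing care is the nondegeneracy $c\neq\mu_v$. This is exactly where the stratified construction \Cref{eq:virtual_rewards} matters.
\begin{itemize}
\item For $c=1$: $\mu_v=1/2$, so $|c-\mu_v|=1/2$.
\item For $c=0$: $\mu_v=r_{\text{anchor}}(K+1)/(2K)>0$ because $r_{\text{anchor}}>0$.
\end{itemize}
In both cases $|c-\mu_v|>0$, so the cancellation is legitimate and $\sigma_{\mathcal{R}_j'}>0$.

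As a consistency check I will also verify the bound case by case using the explicit means \Cref{eq:collapsed_mean}. For $c=1$, $|1-\mu_{\mathcal{R}_j'}|=K/(2(G+K))$. Dividing by $\sqrt{GK}/(2(G+K))$ yields $\sqrt{K/G}$. For $c=0$, $|\mu_{\mathcal{R}_j'}|=r_{\text{anchor}}(K+1)/(2(G+K))$. Dividing by the corresponding bound in \Cref{eq:collapsed_std_lb} again yields $\sqrt{K/G}$.

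I expect no real obstacle. The mixture-variance inequality is already used in \Cref{lem:uniform_adv}, and the remaining steps are algebra.
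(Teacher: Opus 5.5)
Your proof is correct and takes the same approach as the paper. The paper plugs the explicit means and the between-component standard-deviation lower bounds from \Cref{lem:uniform_adv} into the ratio separately for $c=1$ and $c=0$, which is exactly your ``consistency check''. Your unified mixture argument applies that same between-component variance inequality once for both cases, and it makes explicit that the bound $\sqrt{K/G}$ depends on the stratified construction only through the nondegeneracy $\mu_v\neq c$.
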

\begin{proof}
When $c=1$, \Cref{eq:collapsed_mean} gives $1-\mu_{\mathcal{R}_j'}=\frac{K}{2(G+K)}$ and
\Cref{eq:collapsed_std_lb} gives $\sigma_{\mathcal{R}_j'}\ge \frac{\sqrt{GK}}{2(G+K)}$.
Thus,
\[
\big|\hat{A}_1^{(K)}\big|
= \frac{1-\mu_{\mathcal{R}_j'}}{\sigma_{\mathcal{R}_j'}+\epsilon}
\le \frac{1-\mu_{\mathcal{R}_j'}}{\sigma_{\mathcal{R}_j'}}
\le \frac{\frac{K}{2(G+K)}}{\frac{\sqrt{GK}}{2(G+K)}}
= \sqrt{\frac{K}{G}}.
\]
When $c=0$, \Cref{eq:collapsed_mean} gives $\mu_{\mathcal{R}_j'}=\frac{r_{\text{anchor}}(K+1)}{2(G+K)}$ and
\Cref{eq:collapsed_std_lb} gives
$\sigma_{\mathcal{R}_j'}\ge \frac{r_{\text{anchor}}(K+1)}{2(G+K)}\sqrt{\frac{G}{K}}$.
Therefore,
\[
\big|\hat{A}_0^{(K)}\big|
= \frac{\mu_{\mathcal{R}_j'}}{\sigma_{\mathcal{R}_j'}+\epsilon}
\le \frac{\mu_{\mathcal{R}_j'}}{\sigma_{\mathcal{R}_j'}}
\le \frac{\frac{r_{\text{anchor}}(K+1)}{2(G+K)}}{\frac{r_{\text{anchor}}(K+1)}{2(G+K)}\sqrt{\frac{G}{K}}}
= \sqrt{\frac{K}{G}}.
\]
Combining the two cases proves \Cref{eq:uniform_adv_bound}.
\end{proof}

\begin{theorem}[Principled direction of AVSPO updates in homogeneous groups]
\label{thm:direction_homogeneous}
Fix a prompt $q$ and consider the (unclipped, on-policy) per-prompt gradient contribution
\begin{equation}
g_c(q) := \frac{1}{G}\sum_{i=1}^{G} \hat{A}_c^{(K)} \nabla_\theta \log \pi_\theta(y^{(i)}\mid q),
\label{eq:gc_def}
\end{equation}
where $\hat{A}_c^{(K)}$ is the common advantage from \Cref{eq:const_adv}.
Then, conditioned on an all-incorrect group ($c=0$) or an all-correct group ($c=1$), we have
\begin{align}
\mathbb{E}\!\left[g_0(q)\,\middle|\, y^{(i)}\in\mathcal{F}_q~\forall i\right]
&= \hat{A}_0^{(K)} \nabla_\theta \log\!\big(1-p_\theta(q)\big),
\label{eq:g0_direction}\\
\mathbb{E}\!\left[g_1(q)\,\middle|\, y^{(i)}\in\mathcal{S}_q~\forall i\right]
&= \hat{A}_1^{(K)} \nabla_\theta \log p_\theta(q).
\label{eq:g1_direction}
\end{align}
In particular, $\hat{A}_0^{(K)}<0$ and $\hat{A}_1^{(K)}>0$, so gradient ascent \emph{decreases}
$\log(1-p_\theta(q))$ in the all-incorrect case and \emph{increases} $\log p_\theta(q)$ in the all-correct case.
\end{theorem}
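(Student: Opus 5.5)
The plan is to reduce everything to \Cref{lem:cond_score} and \Cref{lem:uniform_adv}. The key structural fact is that in a collapsed group the advantage $\hat{A}_c^{(K)}$ is common to all $G$ real samples. Moreover, by \Cref{eq:collapsed_mean} and \Cref{eq:num_virtual}, it is a deterministic function of $(G,K,c,r_{\text{anchor}},\epsilon)$ alone. Here $K$ is fixed by the batch-level $\text{ACR}^{(n)}$, which we treat as given when conditioning on the group's collapse event; this deserves an explicit remark, since $K$ depends on other groups in the batch but not on the specific responses $y^{(i)}$ within this group. Under this conditioning, $\hat{A}_c^{(K)}$ can be pulled out of the conditional expectation, so
\[
\mathbb{E}[g_c(q)\mid\cdot\,] = \hat{A}_c^{(K)}\,\frac{1}{G}\sum_{i=1}^G \mathbb{E}\big[\nabla_\theta\log\pi_\theta(y^{(i)}\mid q)\,\big|\,\cdot\,\big].
\]

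The next step is to handle the conditional law of each sample. The $G$ responses are i.i.d.\ from $\pi_\theta(\cdot\mid q)$, using the on-policy assumption $\theta=\theta_{\text{old}}$ as stated. Conditioning on the product event $\{y^{(i)}\in\mathcal{F}_q\ \forall i\}$ then makes them i.i.d.\ from $\pi_\theta(\cdot\mid q,\,y\in\mathcal{F}_q)$, because the event factorizes and the conditional joint density is the product of the individual conditional densities. Each summand therefore has the same conditional expectation. Applying \Cref{lem:cond_score} with $\mathcal{E}=\mathcal{F}_q$, which is valid whenever $1-p_\theta(q)>0$ (automatically true since we condition on this event having occurred), gives $\nabla_\theta\log\pi_\theta(\mathcal{F}_q\mid q)=\nabla_\theta\log(1-p_\theta(q))$ by \Cref{eq:success_prob}. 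The $1/G$ average of $G$ identical terms returns this quantity, which yields \Cref{eq:g0_direction}. The case $c=1$ is identical with $\mathcal{E}=\mathcal{S}_q$ and gives \Cref{eq:g1_direction}.

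Finally, the signs follow from \Cref{eq:collapsed_mean} together with $\sigma_{\mathcal{R}'}>0$, the latter coming from the lower bounds in \Cref{eq:collapsed_std_lb} with $K\ge1$:
\begin{itemize}
\item For $c=1$: $1-\mu_{\mathcal{R}'}=\frac{K}{2(G+K)}>0$, so $\hat{A}_1^{(K)}>0$.
\item For $c=0$: $0-\mu_{\mathcal{R}'}=-\frac{r_{\text{anchor}}(K+1)}{2(G+K)}<0$ since $r_{\text{anchor}}>0$, so $\hat{A}_0^{(K)}<0$.
\end{itemize}
The interpretive claim about gradient ascent then follows to first order. A step $\theta+\eta g$ changes $\log(1-p_\theta)$ by $\eta\hat{A}_0^{(K)}\|\nabla\log(1-p_\theta)\|^2\le0$, and changes $\log p_\theta$ by $\eta\hat{A}_1^{(K)}\|\nabla\log p_\theta\|^2\ge0$.

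The main obstacle is justifying the conditioning step. This means checking that $\hat{A}_c^{(K)}$ is measurable with respect to the conditioning (in particular that $K$ and the trigger in \Cref{eq:augmented_rewards} do not depend on the within-group responses beyond their rewards), and that conditioning i.i.d.\ samples on a product event preserves independence with the restricted marginal law. I would state both facts explicitly, treating $K$ as fixed given the batch statistics.
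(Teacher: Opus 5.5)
Your proposal is correct and follows the paper's proof essentially step for step: you pull the common advantage out of the conditional expectation, use that conditioning i.i.d.\ samples on the product event gives i.i.d.\ draws from $\pi_\theta(\cdot\mid q, y\in\mathcal{F}_q)$ (resp.\ $\mathcal{S}_q$), apply \Cref{lem:cond_score}, and read off the signs from \Cref{eq:collapsed_mean}. Your extra remarks (treating $K$ and the trigger as fixed given the batch statistics, and the first-order justification of the ascent claim) make explicit what the paper leaves implicit, and your appeal to $\sigma_{\mathcal{R}'}>0$ for the signs is harmless but unnecessary, since $\epsilon>0$ already makes the denominator positive.
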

\begin{proof}
We first note the signs: when $c=0$, \Cref{eq:collapsed_mean} gives $\mu_{\mathcal{R}_j'}>0$ hence
$\hat{A}_0^{(K)}=(0-\mu_{\mathcal{R}_j'})/(\sigma_{\mathcal{R}_j'}+\epsilon)<0$; when $c=1$,
\Cref{eq:collapsed_mean} gives $\mu_{\mathcal{R}_j'}<1$ hence $\hat{A}_1^{(K)}>0$.

For \Cref{eq:g0_direction}, under the conditioning $y^{(i)}\in \mathcal{F}_q$ for all $i$,
each $y^{(i)}$ is distributed as $\pi_\theta(\cdot\mid q,\,y\in\mathcal{F}_q)$ and remains i.i.d.
Therefore,
\[
\mathbb{E}\!\left[g_0(q)\,\middle|\, y^{(i)}\in\mathcal{F}_q~\forall i\right]
= \hat{A}_0^{(K)}\,
\mathbb{E}_{y\sim \pi_\theta(\cdot\mid q,\,y\in\mathcal{F}_q)}\!\big[\nabla_\theta \log \pi_\theta(y\mid q)\big].
\]
Applying the conditional score identity \Cref{lem:cond_score} with $\mathcal{E}=\mathcal{F}_q$ yields
\[
\mathbb{E}_{y\sim \pi_\theta(\cdot\mid q,\,y\in\mathcal{F}_q)}\!\big[\nabla_\theta \log \pi_\theta(y\mid q)\big]
= \nabla_\theta \log \pi_\theta(\mathcal{F}_q\mid q)
= \nabla_\theta \log\!\big(1-p_\theta(q)\big),
\]
which proves \Cref{eq:g0_direction}. The proof of \Cref{eq:g1_direction} is identical with
$\mathcal{E}=\mathcal{S}_q$, noting $\pi_\theta(\mathcal{S}_q\mid q)=p_\theta(q)$.
\end{proof}

\begin{lemma}[Clipping-induced gradient gating]
\label{lem:clip_gating}
Consider the per-token PPO/GRPO term (cf.\ \Cref{eq:grpo_obj,eq:AVSPO_objective})
\begin{equation}
\ell(\rho;A) := \min\!\Big(\rho A,\; \text{clip}(\rho, 1-\varepsilon, 1+\varepsilon)\,A\Big),
\label{eq:clip_term}
\end{equation}
where $\rho$ is a policy ratio and $A$ is an advantage (here $A=\hat{A}_i'$).
Then:
\begin{align}
A>0,\;\rho \ge 1+\varepsilon \quad &\Longrightarrow\quad \ell(\rho;A)=(1+\varepsilon)A \ \text{and}\ \nabla_\theta \ell(\rho;A)=0,
\label{eq:clip_gate_pos}\\
A<0,\;\rho \le 1-\varepsilon \quad &\Longrightarrow\quad \ell(\rho;A)=(1-\varepsilon)A \ \text{and}\ \nabla_\theta \ell(\rho;A)=0.
\label{eq:clip_gate_neg}
\end{align}
\end{lemma}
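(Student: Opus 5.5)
The plan is a direct case analysis of the clipped surrogate in \Cref{eq:clip_term}. The one thing to fix up front is what varies with $\theta$: $A=\hat{A}_i'$ is computed from rewards and from the virtual samples of \Cref{eq:virtual_rewards}, so it is a constant with respect to $\theta$. The ratio $\rho=\rho_t^{(i)}(\theta)$ is the only $\theta$-dependent quantity. So $\ell$ is $\min$ of two functions of $\rho$, and we identify which branch is active on each region of the hypothesis.

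\textbf{Case $A>0$, $\rho\ge 1+\varepsilon$.} Here $\text{clip}(\rho,1-\varepsilon,1+\varepsilon)=1+\varepsilon$, so the second argument equals $(1+\varepsilon)A$. Since $A>0$ and $\rho\ge 1+\varepsilon$, the first argument satisfies $\rho A\ge(1+\varepsilon)A$. The minimum is therefore $(1+\varepsilon)A$, which is independent of $\theta$.

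\textbf{Case $A<0$, $\rho\le 1-\varepsilon$.} The clip returns $1-\varepsilon$, giving $(1-\varepsilon)A$. Multiplying $\rho\le 1-\varepsilon$ by the negative number $A$ reverses the inequality, so $\rho A\ge(1-\varepsilon)A$. Again the minimum is the constant $(1-\varepsilon)A$.

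In both cases the value is a constant times a $\theta$-free advantage, so the gradient vanishes. The only delicate point is differentiability at the boundary $\rho=1\pm\varepsilon$. There the two branches coincide, but only a one-sided derivative of the constant branch exists: the $\rho A$ branch has nonzero slope, and it is the active branch just inside the boundary.

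I would handle this in one of two ways:
\begin{itemize}
\item interpret $\nabla_\theta\ell$ as the gradient autograd actually computes, i.e.\ the gradient of the selected branch (the clipped one, since $\text{clip}$ has zero derivative at and beyond its saturation point in standard implementations); or
\item state the claim for the open regions $\rho>1+\varepsilon$ and $\rho<1-\varepsilon$, where $\ell$ is locally constant in $\theta$ on an open set of parameters by continuity of $\rho$, and remark that the boundary is measure-zero.
\end{itemize}
This boundary convention is the only real obstacle; the rest is immediate algebra.
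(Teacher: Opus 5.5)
Your case analysis is the same as the paper's proof: the clip saturates at $1\pm\varepsilon$, multiplying the ratio inequality by $A$ (reversing it when $A<0$) shows the clipped branch attains the minimum, and the resulting value $(1\pm\varepsilon)A$ is $\theta$-free because the advantage is treated as a constant. Your boundary caveat is a genuine refinement that the paper skips. Its ``constant w.r.t.\ $\theta$'' step holds only pointwise at $\rho=1\pm\varepsilon$, where $\ell$ has a kink, and your second fix (restricting to the open regions) is the right one; the first fix rests on a false premise, since e.g.\ PyTorch's \texttt{clamp} passes gradient $1$ at its endpoints, so autograd returns $A\nabla_\theta\rho$ rather than $0$ exactly at the boundary.
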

\begin{proof}
We prove \Cref{eq:clip_gate_pos}; \Cref{eq:clip_gate_neg} follows analogously.
When $A>0$ and $\rho\ge 1+\varepsilon$, we have $\text{clip}(\rho,1-\varepsilon,1+\varepsilon)=1+\varepsilon \le \rho$.
Multiplying by $A>0$ preserves the inequality, so $(1+\varepsilon)A \le \rho A$ and hence
$\ell(\rho;A)=\min(\rho A,(1+\varepsilon)A)=(1+\varepsilon)A$, which is constant w.r.t.\ $\theta$.
Therefore its gradient vanishes, i.e., $\nabla_\theta \ell(\rho;A)=0$.
\end{proof}

\begin{remark}[Implications for the reviewer concern: collapse vs.\ sharpening]
In the all-incorrect case, \Cref{thm:direction_homogeneous} shows that the \emph{expected} AVSPO update
is aligned with decreasing $\log(1-p_\theta(q))$, i.e., reducing the probability mass assigned to the failure set
$\mathcal{F}_q$ rather than ``collapsing'' probability without direction.
In the all-correct case, the update increases $\log p_\theta(q)$ but its magnitude is bounded
(\Cref{prop:bounded_uniform_adv}), and the PPO clipping term becomes flat once token-level ratios exceed $1+\varepsilon$
(\Cref{lem:clip_gating}), preventing unbounded over-sharpening from a single group.
\end{remark}

\subsection{Bias Upper Bound and Convergence Implication}
\label{app:bias_convergence}

\begin{assumption}[Bounded score function]
\label{assump:bounded_score}
There exists a constant $B>0$ such that for all prompts $q$ and trajectories $y\in\mathcal{Y}(q)$,
\begin{equation}
\big\|\nabla_\theta \log \pi_\theta(y\mid q)\big\| \le B.
\label{eq:bounded_score}
\end{equation}
\end{assumption}

\begin{proposition}[Bias upper bound of AVSPO relative to GRPO]
\label{prop:avspo_bias_upper}
Consider one training iteration $n$ with a batch of $N$ groups and the corresponding $\text{ACR}^{(n)}$
computed by \Cref{eq:acr}. Let $K^{(n)}$ be the number of virtual samples used by AVSPO in \Cref{eq:num_virtual}.
Define the (unclipped, on-policy) per-batch gradient estimators
\begin{align}
\hat{g}^{\text{GRPO}}(\theta)
&:= \frac{1}{N}\sum_{j=1}^{N}\frac{1}{G}\sum_{i=1}^{G} \hat{A}_j^{(i)} \nabla_\theta \log \pi_\theta(y_j^{(i)}\mid q_j), \label{eq:g_grpo_def}\\
\hat{g}^{\text{AVSPO}}(\theta)
&:= \frac{1}{N}\sum_{j=1}^{N}\frac{1}{G}\sum_{i=1}^{G} \hat{A}_j^{(i)\prime} \nabla_\theta \log \pi_\theta(y_j^{(i)}\mid q_j), \label{eq:g_avspo_def}
\end{align}
where $\hat{A}_j^{(i)}$ and $\hat{A}_j^{(i)\prime}$ are computed by \Cref{eq:grpo_adv,eq:augmented_advantage}.

Under binary rewards and \Cref{assump:bounded_score}, the per-iteration gradient discrepancy satisfies:
\begin{equation}
\big\|\hat{g}^{\text{AVSPO}}(\theta) - \hat{g}^{\text{GRPO}}(\theta)\big\|
\le B \,\sqrt{\frac{K^{(n)}}{G}} \cdot \text{ACR}^{(n)}
\le B \cdot \text{ACR}^{(n)}.
\label{eq:avspo_bias_upper}
\end{equation}
Consequently, the conditional (one-step) bias magnitude is bounded by
\begin{equation}
\Big\|\mathbb{E}\big[\hat{g}^{\text{AVSPO}}(\theta) - \hat{g}^{\text{GRPO}}(\theta)\mid \theta\big]\Big\|
\le B\;\mathbb{E}\!\left[\sqrt{\frac{K^{(n)}}{G}}\;\text{ACR}^{(n)}\;\middle|\;\theta\right].
\label{eq:avspo_cond_bias_upper}
\end{equation}
\end{proposition}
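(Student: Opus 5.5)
The plan is to decompose the per-batch discrepancy group by group and observe that only collapsed, augmented groups contribute. Subtracting the two estimators in \Cref{eq:g_grpo_def,eq:g_avspo_def} gives
\[
\hat{g}^{\text{AVSPO}}-\hat{g}^{\text{GRPO}}=\frac{1}{N}\sum_{j=1}^{N}\frac{1}{G}\sum_{i=1}^{G}\big(\hat{A}_j^{(i)\prime}-\hat{A}_j^{(i)}\big)\nabla_\theta\log\pi_\theta(y_j^{(i)}\mid q_j).
\]
By \Cref{eq:augmented_rewards}, $\mathcal{R}_j'=\mathcal{R}_j$ unless both $\text{ACR}^{(n)}>\tau_{\text{adapt}}^{(n)}$ and $\sigma_{\mathcal{R}_j}<\tau$. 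In every other group the two advantages coincide and the summand vanishes. So the sum ranges only over groups in the collapsed set $\mathcal{C}_n:=\{j:\sigma_{\mathcal{R}_j}<\tau\}$, and $|\mathcal{C}_n|=N\cdot\text{ACR}^{(n)}$ by \Cref{eq:acr}. If the trigger is not fired, the difference is zero and the bound is trivial.

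Under binary rewards, $\sigma_{\mathcal{R}_j}<\tau=10^{-6}$ forces the group to be exactly homogeneous: the smallest nonzero standard deviation of $G$ binary values is $\sqrt{G-1}/G\gg\tau$. Hence every $j\in\mathcal{C}_n$ has $r_j^{(i)}=c_j\in\{0,1\}$ for all $i$, and $\hat{A}_j^{(i)}=0$ by \Cref{eq:collapse}. \Cref{lem:uniform_adv} then gives $\hat{A}_j^{(i)\prime}=\hat{A}_{c_j}^{(K^{(n)})}$ for all $i$. Since $1\le K^{(n)}\le G$ by \Cref{eq:num_virtual}, \Cref{prop:bounded_uniform_adv} yields $|\hat{A}_j^{(i)\prime}-\hat{A}_j^{(i)}|\le\sqrt{K^{(n)}/G}$.

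Next apply the triangle inequality together with \Cref{assump:bounded_score}:
\[
\|\hat{g}^{\text{AVSPO}}-\hat{g}^{\text{GRPO}}\|\le\frac{1}{N}\sum_{j\in\mathcal{C}_n}\frac{1}{G}\sum_{i=1}^{G}\sqrt{\tfrac{K^{(n)}}{G}}\,B=B\sqrt{\tfrac{K^{(n)}}{G}}\,\frac{|\mathcal{C}_n|}{N}=B\sqrt{\tfrac{K^{(n)}}{G}}\,\text{ACR}^{(n)}.
\]
The second inequality in \Cref{eq:avspo_bias_upper} follows from $K^{(n)}\le G$. For \Cref{eq:avspo_cond_bias_upper}, take the conditional expectation given $\theta$ and use Jensen's inequality for the norm, $\|\mathbb{E}[X\mid\theta]\|\le\mathbb{E}[\|X\mid\theta\|]$, then apply the pointwise bound. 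This works because $K^{(n)}$ and $\text{ACR}^{(n)}$ are random through the sampled rollouts, so the bound holds realization by realization.

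The main obstacle is small but real: the bound depends on the collapsed groups being exactly homogeneous, so that \Cref{lem:uniform_adv} and \Cref{prop:bounded_uniform_adv} apply and the GRPO advantage is exactly zero. The binary-reward gap argument above settles this. A second point to state carefully is that $K^{(n)}$ is common to all groups in iteration $n$, which lets the per-group bound factor out of the sum. A third is that the threshold indicator $\text{ACR}^{(n)}>\tau_{\text{adapt}}^{(n)}$ only makes the number of augmented groups smaller than $|\mathcal{C}_n|$, so the bound is only loosened, never violated.
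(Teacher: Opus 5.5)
Your proposal is correct and matches the paper's proof step for step. Both restrict the difference to the collapsed set $\mathcal{C}^{(n)}$ (the bound is trivial if the trigger does not fire), bound each collapsed group's common advantage by $\sqrt{K^{(n)}/G}$ via \Cref{lem:uniform_adv} and \Cref{prop:bounded_uniform_adv}, then apply the triangle inequality with \Cref{assump:bounded_score} and take conditional expectations. Your observation that $\sigma_{\mathcal{R}_j}<\tau$ forces exact homogeneity under binary rewards (the smallest nonzero binary-group standard deviation is $\sqrt{G-1}/G$) and your explicit use of Jensen's inequality make precise two steps the paper leaves implicit; the Jensen bound should read $\mathbb{E}[\|X\|\mid\theta]$.
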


\begin{proof}
If augmentation is not triggered at iteration $n$, then $\hat{A}_j^{(i)\prime}=\hat{A}_j^{(i)}$ for all groups,
hence $\hat{g}^{\text{AVSPO}}=\hat{g}^{\text{GRPO}}$ and the bound is trivial.

Assume augmentation is triggered.
Then AVSPO modifies advantages only for groups with $\sigma_{\mathcal{R}_j}<\tau$.
Let $\mathcal{C}^{(n)}:=\{j:\sigma_{\mathcal{R}_j}<\tau\}$ denote the set of collapsed groups in this batch,
so $|\mathcal{C}^{(n)}|/N=\text{ACR}^{(n)}$ by \Cref{eq:acr}. For any $j\in\mathcal{C}^{(n)}$, GRPO yields
$\hat{A}_j^{(i)}=0$ for all $i$ (cf.\ \Cref{eq:collapse}), while AVSPO yields a common advantage
$\hat{A}_j^{(i)\prime}\equiv \hat{A}^{(K^{(n)})}_{c}$ (cf.\ \Cref{lem:uniform_adv}).
Therefore,
\begin{align*}
\hat{g}^{\text{AVSPO}}(\theta) - \hat{g}^{\text{GRPO}}(\theta)
&= \frac{1}{N}\sum_{j\in\mathcal{C}^{(n)}}\frac{1}{G}\sum_{i=1}^{G}
\hat{A}^{(K^{(n)})}_{c}\;\nabla_\theta \log \pi_\theta(y_j^{(i)}\mid q_j).
\end{align*}
Taking norms and applying \Cref{eq:bounded_score} gives
\begin{align*}
\big\|\hat{g}^{\text{AVSPO}}(\theta) - \hat{g}^{\text{GRPO}}(\theta)\big\|
&\le \frac{1}{N}\sum_{j\in\mathcal{C}^{(n)}}\frac{1}{G}\sum_{i=1}^{G}
\big|\hat{A}^{(K^{(n)})}_{c}\big|\;\big\|\nabla_\theta \log \pi_\theta(y_j^{(i)}\mid q_j)\big\| \\
&\le \frac{|\mathcal{C}^{(n)}|}{N}\;\big|\hat{A}^{(K^{(n)})}_{c}\big|\;B
= \text{ACR}^{(n)}\;\big|\hat{A}^{(K^{(n)})}_{c}\big|\;B.
\end{align*}
Finally, \Cref{prop:bounded_uniform_adv} gives $\big|\hat{A}^{(K^{(n)})}_{c}\big|\le \sqrt{K^{(n)}/G}\le 1$
(since $K^{(n)}\le G$), which yields \Cref{eq:avspo_bias_upper}.
The conditional bound \Cref{eq:avspo_cond_bias_upper} follows by taking conditional expectation on both sides.
\end{proof}

\begin{assumption}[Smoothness and boundedness for the reference objective]
\label{assump:smoothness}
Let $F(\theta)$ denote the (population) objective whose stationary points are regarded as the ``correct''
fixed points for the baseline algorithm (e.g., $F=\mathcal{J}^{\text{GRPO}}$ under the standard PPO/GRPO analysis).
Assume $F$ is $L$-smooth and upper-bounded:
\begin{equation}
\|\nabla F(\theta)-\nabla F(\theta')\|\le L\|\theta-\theta'\|,
\qquad
F(\theta)\le F_{\max}<\infty.
\label{eq:smooth_bounded}
\end{equation}
\end{assumption}

\begin{theorem}[Convergence to an approximate stationary point under bounded bias]
\label{thm:biased_sgd_ascent}
Consider the stochastic ascent update
\begin{equation}
\theta^{(n+1)} = \theta^{(n)} + \eta_\theta\, \hat{g}^{(n)},
\label{eq:sgd_ascent}
\end{equation}
where $\hat{g}^{(n)}$ is the stochastic gradient used by AVSPO at iteration $n$.
Assume \Cref{assump:smoothness} and that for each $n$,
\begin{equation}
\mathbb{E}\!\left[\hat{g}^{(n)}\mid \theta^{(n)}\right] = \nabla F(\theta^{(n)}) + b^{(n)},
\qquad
\mathbb{E}\!\left[\left\|\hat{g}^{(n)}-\mathbb{E}[\hat{g}^{(n)}\mid \theta^{(n)}]\right\|^2\mid \theta^{(n)}\right]\le \sigma^2,
\label{eq:bias_variance_decomp}
\end{equation}
for some (possibly iteration-dependent) bias vector $b^{(n)}$ and noise level $\sigma^2$.
If $\eta_\theta \le \frac{1}{4L}$, then for any $T\ge 1$,
\begin{equation}
\frac{1}{T}\sum_{n=0}^{T-1}\mathbb{E}\!\left[\left\|\nabla F(\theta^{(n)})\right\|^2\right]
\le
\frac{4\big(F_{\max}-F(\theta^{(0)})\big)}{\eta_\theta T}
+ 2L\eta_\theta \sigma^2
+ \frac{3}{T}\sum_{n=0}^{T-1}\mathbb{E}\!\left[\|b^{(n)}\|^2\right].
\label{eq:biased_convergence_bound}
\end{equation}
In particular, if $\|b^{(n)}\|\le b$ uniformly, then the iterates converge to an $O(b)$-stationary region;
and if $\|b^{(n)}\|\to 0$, then $\liminf_{n\to\infty}\mathbb{E}\|\nabla F(\theta^{(n)})\|=0$.
\end{theorem}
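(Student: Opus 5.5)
This is the standard biased-SGD ascent argument. We apply the $L$-smooth ascent lemma to $F$ at each step, take conditional expectations using the bias--variance decomposition in the hypothesis, absorb the cross terms with Young's inequality, and telescope.

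\textbf{One-step inequality.} By $L$-smoothness (Assumption~\ref{assump:smoothness}),
\[
F(\theta^{(n+1)}) \ge F(\theta^{(n)}) + \eta_\theta \langle \nabla F(\theta^{(n)}), \hat g^{(n)}\rangle - \tfrac{L\eta_\theta^2}{2}\|\hat g^{(n)}\|^2 .
\]
Write $\nabla_n:=\nabla F(\theta^{(n)})$ and take $\mathbb{E}[\cdot\mid\theta^{(n)}]$. The linear term becomes $\eta_\theta(\|\nabla_n\|^2+\langle\nabla_n,b^{(n)}\rangle)$. The quadratic term splits as
\[
\mathbb{E}\|\hat g^{(n)}\|^2 \le \sigma^2+\|\nabla_n+b^{(n)}\|^2\le \sigma^2+2\|\nabla_n\|^2+2\|b^{(n)}\|^2 .
\]
For the cross term, Young's inequality gives $\langle \nabla_n,b^{(n)}\rangle\ge -\tfrac12\|\nabla_n\|^2-\tfrac12\|b^{(n)}\|^2$. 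Combining, we get
\[
\mathbb{E}[F(\theta^{(n+1)})\mid\theta^{(n)}]\ge F(\theta^{(n)})+\eta_\theta\big(\tfrac12-L\eta_\theta\big)\|\nabla_n\|^2-\eta_\theta\big(\tfrac12+L\eta_\theta\big)\|b^{(n)}\|^2-\tfrac{L\eta_\theta^2}{2}\sigma^2 .
\]

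\textbf{Inserting the step-size condition.} With $\eta_\theta\le 1/(4L)$ we have $\tfrac12-L\eta_\theta\ge\tfrac14$ and $\tfrac12+L\eta_\theta\le\tfrac34$. The inequality then reads
\[
\tfrac{\eta_\theta}{4}\|\nabla_n\|^2\le \mathbb{E}[F(\theta^{(n+1)})\mid\theta^{(n)}]-F(\theta^{(n)})+\tfrac{3\eta_\theta}{4}\|b^{(n)}\|^2+\tfrac{L\eta_\theta^2}{2}\sigma^2 .
\]

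\textbf{Telescoping.} Take total expectations and sum over $n=0,\dots,T-1$. Bound $\mathbb{E}F(\theta^{(T)})\le F_{\max}$, then multiply by $4/(\eta_\theta T)$. This yields the three terms $4(F_{\max}-F(\theta^{(0)}))/(\eta_\theta T)$, $2L\eta_\theta\sigma^2$, and $\tfrac{3}{T}\sum_n\mathbb{E}\|b^{(n)}\|^2$, which is exactly \eqref{eq:biased_convergence_bound}. The constants come out matching, so the main care point is simply choosing the Young split with weight $\tfrac12$ so that the coefficients $\tfrac14$ and $\tfrac34$ appear.

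\textbf{Corollaries.} If $\|b^{(n)}\|\le b$, the average of $\mathbb{E}\|\nabla F\|^2$ is at most $O(1/(\eta_\theta T))+2L\eta_\theta\sigma^2+3b^2$. So the minimum over $n<T$ of $\mathbb{E}\|\nabla F(\theta^{(n)})\|$ is eventually $O(b)$ up to the step-size noise floor $\sqrt{2L\eta_\theta\sigma^2}$, which is the intended meaning of an ``$O(b)$-stationary region''. If $\|b^{(n)}\|\to 0$, the Cesàro average of $\mathbb{E}\|b^{(n)}\|^2$ vanishes. Then $\liminf_n \mathbb{E}\|\nabla F(\theta^{(n)})\|^2\le 2L\eta_\theta\sigma^2$, and Jensen's inequality transfers this to $\mathbb{E}\|\nabla F\|$.

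\textbf{Main obstacle.} The hard part is the final ``$\liminf=0$'' claim. With a fixed step size, the noise floor $2L\eta_\theta\sigma^2$ does not vanish as $T\to\infty$, so the claim does not follow from \eqref{eq:biased_convergence_bound} alone. I would either read it as holding in the noiseless regime $\sigma=0$, or note that it requires letting $\eta_\theta\to0$ with $\eta_\theta T\to\infty$ (for instance $\eta_\theta\propto 1/\sqrt{T}$). In the latter case the first two terms both vanish and only the bias average remains.
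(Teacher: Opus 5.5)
Your derivation of the averaged bound is the paper's proof essentially verbatim: the same $L$-smooth ascent inequality, the same Young split $\langle x,y\rangle\ge-\tfrac12\|x\|^2-\tfrac12\|y\|^2$ combined with $\|x+y\|^2\le 2\|x\|^2+2\|y\|^2$, the same coefficients $\tfrac14$ and $\tfrac34$ under $\eta_\theta\le 1/(4L)$, and the same telescoping against $F_{\max}$. Your caveat about the closing claims is also correct and goes beyond the paper, whose proof stops at the averaged bound and never justifies them: with fixed $\eta_\theta$ and $\sigma>0$ the bound only gives $\liminf_{n}\mathbb{E}\|\nabla F(\theta^{(n)})\|^2\le 2L\eta_\theta\sigma^2$, and an $O(b+\sqrt{L\eta_\theta}\,\sigma)$ region in the bounded-bias case, so the stated $\liminf=0$ requires either $\sigma=0$ or a vanishing step size with $\eta_\theta T\to\infty$.
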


\begin{proof}
By $L$-smoothness of $F$, for $\Delta^{(n)}:=\theta^{(n+1)}-\theta^{(n)}=\eta_\theta \hat{g}^{(n)}$ we have
\begin{equation*}
F(\theta^{(n+1)}) \ge F(\theta^{(n)}) + \left\langle \nabla F(\theta^{(n)}), \Delta^{(n)}\right\rangle
-\frac{L}{2}\|\Delta^{(n)}\|^2.
\end{equation*}
Taking conditional expectation given $\theta^{(n)}$ and using \Cref{eq:bias_variance_decomp} yields
\begin{align*}
\mathbb{E}\!\left[F(\theta^{(n+1)})\mid \theta^{(n)}\right]
&\ge F(\theta^{(n)}) + \eta_\theta \left\langle \nabla F(\theta^{(n)}), \nabla F(\theta^{(n)})+b^{(n)}\right\rangle
-\frac{L\eta_\theta^2}{2}\,\mathbb{E}\!\left[\|\hat{g}^{(n)}\|^2\mid \theta^{(n)}\right] \\
&\ge F(\theta^{(n)}) + \eta_\theta\!\left(\frac{1}{2}\|\nabla F(\theta^{(n)})\|^2 - \frac{1}{2}\|b^{(n)}\|^2\right)
-\frac{L\eta_\theta^2}{2}\Big(\|\nabla F(\theta^{(n)})+b^{(n)}\|^2+\sigma^2\Big) \\
&\ge F(\theta^{(n)}) + \left(\frac{\eta_\theta}{2}-L\eta_\theta^2\right)\|\nabla F(\theta^{(n)})\|^2
-\left(\frac{\eta_\theta}{2}+L\eta_\theta^2\right)\|b^{(n)}\|^2
-\frac{L\eta_\theta^2}{2}\sigma^2,
\end{align*}
where we used $\langle x,y\rangle\ge -\frac{1}{2}\|x\|^2-\frac{1}{2}\|y\|^2$ and
$\|x+y\|^2\le 2\|x\|^2+2\|y\|^2$.
Under $\eta_\theta\le \frac{1}{4L}$, we have $\frac{\eta_\theta}{2}-L\eta_\theta^2\ge \frac{\eta_\theta}{4}$
and $\frac{\eta_\theta}{2}+L\eta_\theta^2\le \frac{3\eta_\theta}{4}$, hence
\begin{equation*}
\mathbb{E}[F(\theta^{(n+1)})]
\ge \mathbb{E}[F(\theta^{(n)})] + \frac{\eta_\theta}{4}\mathbb{E}\|\nabla F(\theta^{(n)})\|^2
-\frac{3\eta_\theta}{4}\mathbb{E}\|b^{(n)}\|^2
-\frac{L\eta_\theta^2}{2}\sigma^2.
\end{equation*}
Summing over $n=0,\dots,T-1$ and using $F(\theta^{(T)})\le F_{\max}$ gives \Cref{eq:biased_convergence_bound}.
\end{proof}

\begin{corollary}[Implication for AVSPO: convergence when collapse vanishes]
\label{cor:avspo_convergence}
Let $F=\mathcal{J}^{\text{GRPO}}$ in \Cref{thm:biased_sgd_ascent}.
If the virtual augmentation becomes rare in the sense that $\mathbb{E}[\text{ACR}^{(n)}]\to 0$,
then by \Cref{prop:avspo_bias_upper} the bias term satisfies $\|b^{(n)}\|\to 0$ (in expectation),
and AVSPO converges to the same first-order stationary set as GRPO under the standard smoothness/noise assumptions.
\end{corollary}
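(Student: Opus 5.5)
The plan is to instantiate \Cref{thm:biased_sgd_ascent} with $F=\mathcal{J}^{\text{GRPO}}$ and the AVSPO stochastic gradient $\hat{g}^{(n)}=\hat{g}^{\text{AVSPO}}(\theta^{(n)})$. The key is to split its conditional mean into the GRPO part plus a bias, so that the bias is controlled by \Cref{prop:avspo_bias_upper}.

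\textbf{Step 1 (decomposition).} Write
\[
\mathbb{E}[\hat{g}^{\text{AVSPO}}\mid\theta^{(n)}]=\mathbb{E}[\hat{g}^{\text{GRPO}}\mid\theta^{(n)}]+\mathbb{E}[\hat{g}^{\text{AVSPO}}-\hat{g}^{\text{GRPO}}\mid\theta^{(n)}].
\]
Under the standard PPO/GRPO analysis invoked in \Cref{assump:smoothness}, the first term equals $\nabla F(\theta^{(n)})$. We therefore take this as part of the modelling convention that makes $F$ the reference objective. This identifies
\[
b^{(n)}=\mathbb{E}[\hat{g}^{\text{AVSPO}}-\hat{g}^{\text{GRPO}}\mid\theta^{(n)}].
\]
The variance condition in \Cref{eq:bias_variance_decomp} needs a bound on the AVSPO noise. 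By \Cref{assump:bounded_score}, the GRPO advantage bound $|\hat{A}|\le\sqrt{G}$, and \Cref{prop:bounded_uniform_adv}, every per-sample term is bounded by a constant times $B$. The noise is therefore bounded uniformly by some $\sigma^2$.

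\textbf{Step 2 (bias vanishes).} By \Cref{eq:avspo_cond_bias_upper} together with $K^{(n)}\le G$,
\[
\|b^{(n)}\|\le B\,\mathbb{E}[\text{ACR}^{(n)}\mid\theta^{(n)}].
\]
Taking total expectation gives $\mathbb{E}\|b^{(n)}\|\le B\,\mathbb{E}[\text{ACR}^{(n)}]\to0$. For the squared term appearing in \Cref{eq:biased_convergence_bound}, note that $\text{ACR}^{(n)}\in[0,1]$. Hence
\[
\|b^{(n)}\|^2\le B^2\,\mathbb{E}[\text{ACR}^{(n)}\mid\theta^{(n)}],
\]
and so $\mathbb{E}\|b^{(n)}\|^2\le B^2\,\mathbb{E}[\text{ACR}^{(n)}]\to0$. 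This is the step where the hypothesis is actually used.

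\textbf{Step 3 (conclude).} Plug into \Cref{eq:biased_convergence_bound}. A Cesàro average of a sequence tending to zero tends to zero, so the bias term vanishes as $T\to\infty$. The remaining terms are $O(1/(\eta_\theta T))+O(L\eta_\theta\sigma^2)$. Two regimes follow:
\begin{itemize}
\item With a constant step size, we get convergence to the same $O(\eta_\theta\sigma^2)$-neighbourhood of the stationary set that GRPO itself attains.
\item With step sizes chosen so that $\eta_\theta T\to\infty$ and $\eta_\theta\to0$ (applying the theorem blockwise), the asymmetric $\liminf$ statement of \Cref{thm:biased_sgd_ascent} gives $\liminf_n\mathbb{E}\|\nabla F(\theta^{(n)})\|=0$.
\end{itemize}
Either way, this is the same first-order guarantee GRPO enjoys.

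\textbf{Main obstacle.} I expect Step 1 to be the main difficulty: justifying that $\mathbb{E}[\hat{g}^{\text{GRPO}}\mid\theta]$ is exactly $\nabla F$, and that AVSPO's adaptive threshold, which depends on the past, does not break the conditioning. I would handle this by conditioning on the full history $\mathcal{F}_n$ rather than on $\theta^{(n)}$ alone. Since $\tau_{\text{adapt}}^{(n)}$ is $\mathcal{F}_n$-measurable, the bound of \Cref{prop:avspo_bias_upper} holds pathwise, and the argument of \Cref{thm:biased_sgd_ascent} goes through verbatim with $\mathcal{F}_n$-conditional expectations.
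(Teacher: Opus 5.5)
Your proposal is correct and follows the paper's own implicit one-line argument. You identify $b^{(n)}=\mathbb{E}[\hat{g}^{\text{AVSPO}}-\hat{g}^{\text{GRPO}}\mid\theta^{(n)}]$, bound it by $B\,\mathbb{E}[\text{ACR}^{(n)}\mid\theta^{(n)}]$ via \Cref{prop:avspo_bias_upper}, and substitute into \Cref{thm:biased_sgd_ascent}, while adding steps the paper leaves implicit: you control $\mathbb{E}\|b^{(n)}\|^2$ through $\text{ACR}^{(n)}\le 1$ (the theorem needs the second moment, the corollary only asserts the first), and you note that $\liminf_n\mathbb{E}\|\nabla F(\theta^{(n)})\|=0$ requires $\eta_\theta\to 0$, which is most cleanly obtained by summing the theorem's one-step inequality with weights $\eta_n$, $\sum_n\eta_n=\infty$, rather than restarting the theorem on blocks.
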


\begin{remark}[Clipping only reduces the discrepancy]
All bounds above are derived for the unclipped, on-policy form (cf.\ \Cref{eq:gc_def}).
For the actual PPO/GRPO clipped objective, the gradient is further \emph{gated} by clipping
(\Cref{lem:clip_gating}), which can only reduce the magnitude of per-sample contributions.
Therefore, \Cref{eq:avspo_bias_upper} is a conservative upper bound for the implemented AVSPO update.
\end{remark}

\section{Algorithm and Computational Complexity}
\label{app:algorithm}

Algorithm~\ref{alg:AVSPO} presents the complete AVSPO training procedure.

\begin{algorithm}[!htb]
\caption{AVSPO Training Algorithm}
\label{alg:AVSPO}
\begin{algorithmic}[1]
\REQUIRE Initial policy $\pi_{\theta_0}$, question set $\mathcal{Q}$, group size $G$, sensitivity $\alpha$, initial threshold $\tau_{\text{adapt}}^{(0)}$, threshold learning rate $\eta$, policy learning rate $\eta_{\theta}$
\ENSURE Optimized policy $\pi_\theta$
\FOR{iteration $n = 1, 2, \ldots, N_{\text{iter}}$}
    \STATE \textcolor{blue}{\textit{// Stage 1: Sampling \& ACR Computation}}
    \STATE Sample batch $\{q_1, \ldots, q_N\} \sim \mathcal{Q}$
    \FOR{each question $q_j$ in batch}
        \STATE Sample $G$ solutions: $\mathcal{O}_j = \{y_j^{(1)}, \ldots, y_j^{(G)}\} \sim \pi_{\theta_{\text{old}}}(\cdot|q_j)$
        \STATE Evaluate rewards: $\mathcal{R}_j = \{r(q_j, y_j^{(1)}), \ldots, r(q_j, y_j^{(G)})\}$
    \ENDFOR
    \STATE Compute $\text{ACR}^{(n)}$ via Equation~\ref{eq:acr}
    \STATE \textcolor{blue}{\textit{// Stage 2: Virtual Sample Augmentation}}
    \FOR{each group $j = 1, \ldots, N$}
        \IF{$\text{ACR}^{(n)} > \tau_{\text{adapt}}^{(n)}$ \textbf{and} $\sigma_{\mathcal{R}_j} < \tau$}
            \STATE $K \gets \max(1, \min(G, \lceil G \cdot (\text{ACR}^{(n)})^{\alpha} \rceil))$ \COMMENT{Eq.~\ref{eq:num_virtual}}
            \STATE Generate $\mathcal{V}_j = \{v_1, \ldots, v_K\}$ via Equation~\ref{eq:virtual_rewards}
            \STATE $\mathcal{R}_j' \gets \mathcal{R}_j \cup \mathcal{V}_j$
        \ELSE
            \STATE $\mathcal{R}_j' \gets \mathcal{R}_j$
        \ENDIF
        \STATE Compute advantages $\{\hat{A}_i'\}_{i=1}^G$ via Equation~\ref{eq:augmented_advantage} \COMMENT{Only $G$ real samples}
    \ENDFOR
    \STATE \textcolor{blue}{\textit{// Stage 3: Parameter Update}}
    \STATE Update policy: $\theta \gets \theta + \eta_{\theta} \nabla_\theta \mathcal{J}^{\text{AVSPO}}(\theta)$ \COMMENT{Virtual samples do not contribute gradients}
    \STATE Update threshold: $\tau_{\text{adapt}}^{(n+1)} \gets \tau_{\text{adapt}}^{(n)} + \eta \cdot \text{sign}(\Delta J^{(n)}) \cdot (\text{ACR}^{(n)} - \tau_{\text{adapt}}^{(n)})$
\ENDFOR
\STATE \textbf{return} $\pi_\theta$
\end{algorithmic}
\end{algorithm}

\textbf{Computational Complexity.} AVSPO's per-iteration complexity is $O(NG)$, identical to vanilla GRPO. ACR computation requires $O(NG)$ operations for reward variance calculation. Virtual sample generation adds at most $O(NG)$ operations when all groups are collapsed. The memory overhead is negligible: storing $K \leq G$ scalar rewards per collapsed group requires only $O(NG)$ additional floats. No additional LLM forward passes are required, preserving GRPO's $\sim$35\% memory advantage over actor-critic methods.

\subsection{Empirical Computational Overhead}
\label{app:overhead}

To validate the theoretical complexity analysis, we profile the per-step training time breakdown for Qwen2.5-Math-1.5B on Level3-500. Figure~\ref{fig:overhead} presents the results. Table~\ref{tab:overhead} summarizes the detailed timing breakdown.

\begin{figure}[!htb]
\centering
\begin{subfigure}[t]{0.48\textwidth}
\centering
\includegraphics[width=\linewidth]{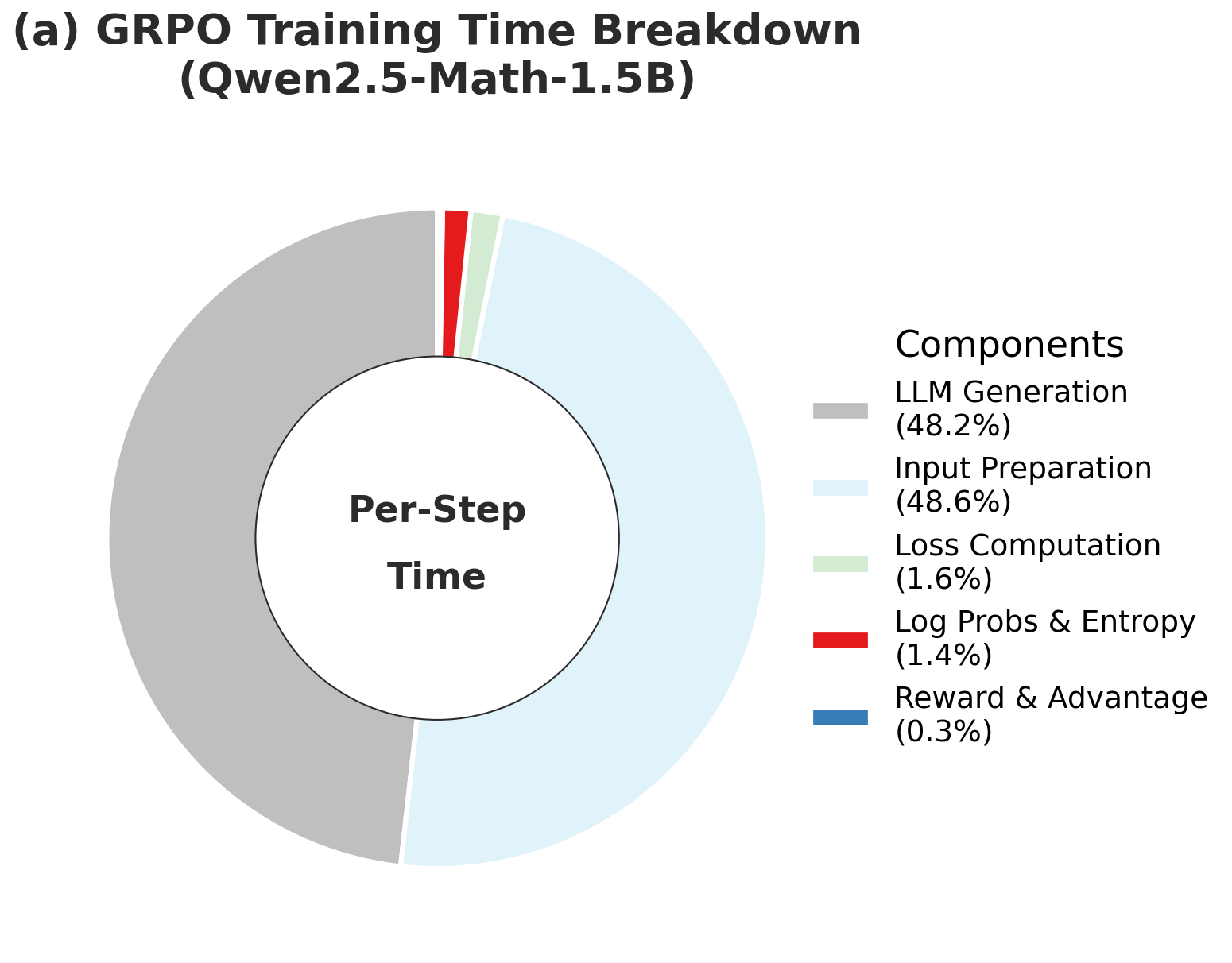}
\end{subfigure}
\hspace{0.02\textwidth} %
\begin{subfigure}[t]{0.48\textwidth}
\centering
\includegraphics[width=\linewidth]{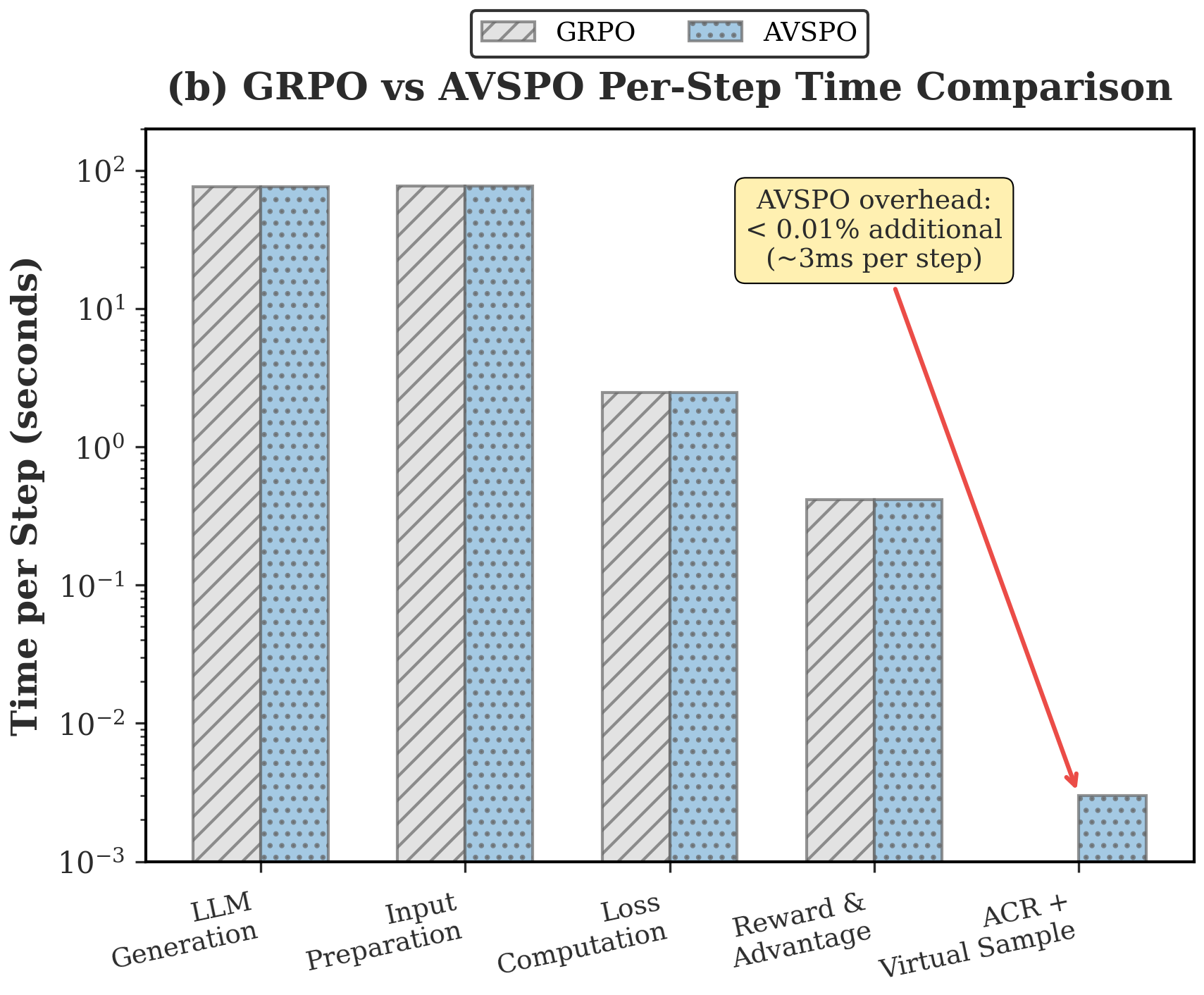}
\end{subfigure}
\caption{\textbf{Computational overhead analysis.} (a) Per-step time breakdown for GRPO training on Qwen2.5-Math-1.5B. LLM generation and input preparation dominate ($>$96\% of total time), while reward and advantage computation account for only 0.3\%. (b) Comparison between GRPO and AVSPO. AVSPO adds $<$0.01\% overhead ($\sim$3ms per step) for ACR monitoring and virtual sample generation, which is negligible compared to the 27.97s per-step total time.}
\label{fig:overhead}
\end{figure}

\begin{table}[!htb]
\caption{Per-step training time breakdown for Qwen2.5-Math-1.5B. Components execute in parallel across 8 GPUs; percentages show relative computational cost. AVSPO's additional overhead is negligible.}
\label{tab:overhead}
\centering
\small
\begin{tabular}{lrr}
\toprule
\textbf{Component} & \textbf{Time (s)} & \textbf{Percentage} \\
\midrule
LLM Generation & 76.55 & 48.2\% \\
Input Preparation & 77.10 & 48.6\% \\
Loss Computation & 2.47 & 1.6\% \\
Log Probs \& Entropy & 2.18 & 1.4\% \\
Reward Calculation & 0.42 & 0.3\% \\
\midrule
\rowcolor{methodhighlight}
\textbf{ACR + Virtual Sample} & \textbf{0.003} & \textbf{$<$0.01\%} \\
\midrule
\textbf{Component Total} & 158.72 & 100\% \\
\textbf{Wall-clock per Step} & 27.97 & (parallel) \\
\textbf{Wall-clock (500 steps)} & 13,987s (3.9h) & --- \\
\bottomrule
\end{tabular}
\end{table}

\textbf{Key Findings:}
\begin{itemize}[leftmargin=*,itemsep=2pt]
    \item \textbf{LLM-dominated computation}: LLM generation (48.2\%) and input preparation (48.6\%) account for over 96\% of computational cost. Reward and advantage computation account for only 0.3\%. Due to parallel execution across 8 GPUs, wall-clock time per step is 27.97s.
    \item \textbf{Negligible AVSPO overhead}: ACR monitoring and virtual sample generation add approximately 3ms per step ($<$0.01\% overhead). For a complete 500-step training run, this translates to only $\sim$1.5 seconds of additional wall-clock time.
    \item \textbf{No additional GPU memory}: AVSPO stores at most $K \leq G$ scalar values per collapsed group, requiring negligible memory compared to model parameters and activations.
    \item \textbf{No additional LLM calls}: Unlike methods that require additional sampling or model queries, AVSPO operates purely on reward statistics already computed during standard GRPO training.
\end{itemize}

These results confirm that AVSPO achieves its 58--63\% reduction in advantage collapse with virtually zero computational cost, making it a practical drop-in replacement for vanilla GRPO.

\section{Prompt Template and Reward Function}
\label{app:prompt}

Figure~\ref{fig:prompt_template} illustrates the prompt template employed for all training and evaluation experiments.

\begin{figure}[!htb]
\centering
\includegraphics[width=0.90\textwidth]{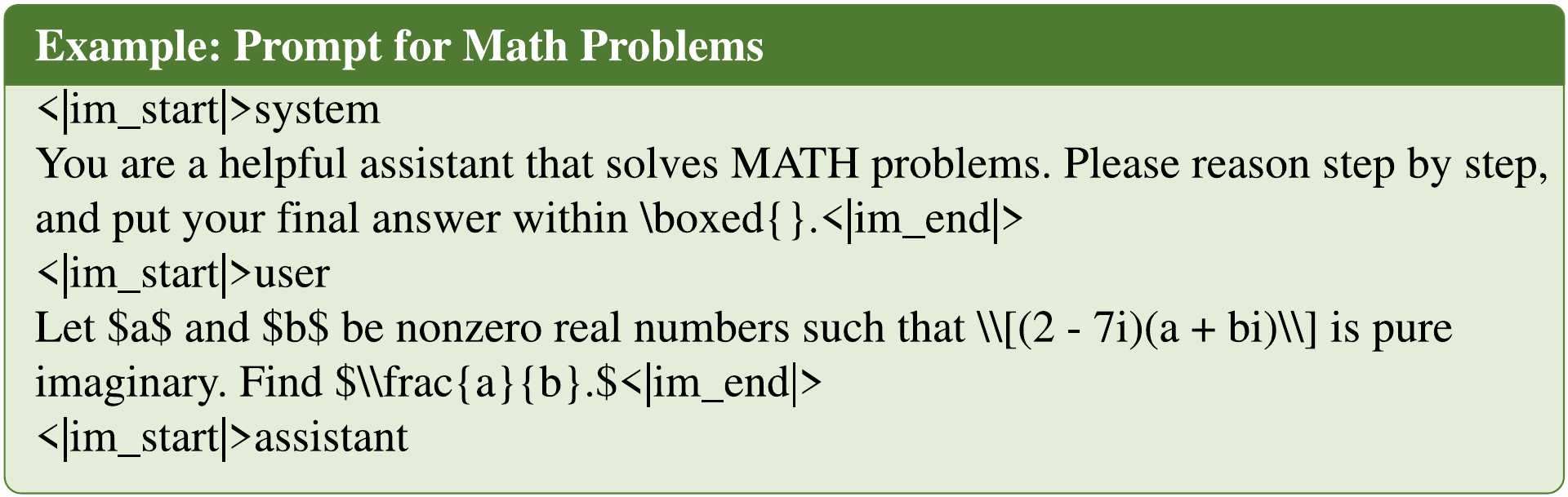}
\caption{Prompt template for mathematical reasoning tasks. The model is instructed to solve problems step by step and present the final answer within \texttt{\textbackslash boxed\{\}}.}
\label{fig:prompt_template}
\end{figure}

For base model evaluation, we report the best accuracy obtained using either the prompt template above or the default Qwen~\citep{qwen2.5} system prompt: \textit{``Please reason step by step, and put your final answer within \textbackslash boxed\{\}.''}

\textbf{Reward Function.} We employ a binary box accuracy reward:
\begin{equation}
r(q, y) = \mathbb{I}\left[\texttt{extract}(y) = a^*\right]
\end{equation}
where $\texttt{extract}(\cdot)$ parses the final answer enclosed in \texttt{\textbackslash boxed\{\}} from the model's response, and $a^*$ denotes the ground-truth answer. The comparison uses symbolic equivalence checking to handle mathematically equivalent expressions (\textit{e.g.}, $\frac{1}{2} = 0.5$). This sparse, outcome-only reward signal is characteristic of the RLVR paradigm and directly contributes to the advantage collapse phenomenon we study.

\section{Code Generation with Deterministic Verifiers}
\label{app:code_generation}

To test whether advantage collapse is specific to mathematical reasoning, we evaluate AVSPO on code generation, another binary-reward RLVR setting with deterministic verification. We train Qwen2.5-Coder-7B on MBPP train-500 and evaluate pass@1 on the MBPP test set over 3 seeds.

\begin{table}[h]
\caption{Code generation results on MBPP with test-based deterministic verification.}
\label{tab:mbpp}
\centering
\small
\setlength{\tabcolsep}{6pt}
\begin{tabular}{lcccc}
\toprule
\textbf{Model} & \textbf{Base} & \textbf{GRPO} & \textbf{AVSPO} & \textbf{ACR} \\
\midrule
Qwen2.5-Coder-7B & 76.9 & 83.7$\pm$1.2 & \textbf{86.7$\pm$0.9} & 41.2\%$\rightarrow$18.5\% \\
\bottomrule
\end{tabular}
\end{table}

AVSPO improves pass@1 by 3.0 points over GRPO while reducing ACR from 41.2\% to 18.5\%. This supports applicability beyond math when the task has the same core structure: binary rewards produced by a deterministic verifier. Extending AVSPO to soft rewards or noisy human feedback remains future work.

\section{Sensitivity Analysis Details}
\label{app:sensitivity_details}

This appendix provides detailed analysis of how ACR responds to four key factors that influence reward diversity. For each factor, we vary one hyperparameter while fixing others, measuring ACR$_{100}$ (mean ACR over the first 100 steps) and final accuracy.

\textbf{Model Scale.} Figure~\ref{fig:sensitivity}(a) reveals a general trend where larger models tend to achieve lower ACR and higher accuracy, though the relationship is not strictly monotonic. Smaller models (0.5B, 1.5B) consistently suffer from high ACR ($>$0.70) with accuracy below 35\%, while mid-to-large models (3B--14B) achieve substantially lower ACR ($<$0.40) with accuracy above 50\%. Notably, the 7B model achieves the lowest ACR (0.15), even lower than the 14B model (0.36), suggesting that the optimal capacity for a given task difficulty may not always be the largest. This pattern reflects that increased capacity generally enables more diverse response generation, but task-model matching also plays a role: when model capability significantly exceeds task difficulty, the model may consistently succeed (all-correct), paradoxically increasing ACR.

\textbf{Sampling Temperature.} Figure~\ref{fig:sensitivity}(b) shows that ACR decreases monotonically with temperature, from 0.46 at $T=0.1$ to 0.08 at $T=1.0$. However, accuracy exhibits a non-monotonic pattern, peaking at moderate temperatures ($T=0.3$--$0.5$). This decoupling reveals a key insight: \textit{low ACR is necessary but not sufficient for good performance}. At low temperatures, deterministic sampling produces homogeneous outputs (high ACR); at high temperatures, increased stochasticity ensures reward diversity (low ACR) but introduces excessive randomness that degrades solution quality. The optimal regime balances exploration (diverse outputs, low ACR) with exploitation (coherent reasoning, high accuracy).

\textbf{Group Size.} Figure~\ref{fig:sensitivity}(c) demonstrates that increasing $G$ from 2 to 8 reduces ACR from 0.52 to 0.20, with accuracy improving correspondingly. Larger groups increase the probability of observing mixed outcomes: for a problem with success rate $p$, the probability of reward homogeneity (all-correct or all-incorrect) is $p^G + (1-p)^G$, which decreases exponentially with $G$. However, the gains diminish beyond $G=6$, while computational cost scales linearly. This suggests $G \in [6, 8]$ as the practical sweet spot, balancing gradient effectiveness against resource constraints.

\textbf{Problem Difficulty.} Figure~\ref{fig:sensitivity}(d) reveals a U-shaped relationship: both easy (Level 0--1) and hard (Level 5--6) problems cause high ACR ($>$0.45), while intermediate difficulty (Level 3--4) yields optimal conditions (ACR $\approx$ 0.30, accuracy $>$55\%). The mechanism is symmetric: easy problems lead to uniform success ($r_i = 1\;\forall i$), hard problems lead to uniform failure ($r_i = 0\;\forall i$), and both result in $\sigma_{\mathcal{R}} = 0$. This finding aligns with curriculum learning principles~\citep{bengio2009curriculum} and validates our choice of Level3-500 for training data construction.

\section{ACR Training Dynamics}
\label{app:training_curves}

This appendix provides detailed visualizations of ACR evolution throughout training under different hyperparameter configurations. These curves complement the summary statistics presented in the main text (Figure~\ref{fig:sensitivity}) by showing the full training dynamics.

\subsection{ACR Dynamics Across Model Scales}

Figure~\ref{fig:app_model_size} shows how ACR evolves during training for models ranging from 0.5B to 14B parameters. Several patterns emerge:

\begin{itemize}
    \item \textbf{Scale-dependent convergence}: Larger models (7B, 14B) rapidly converge to lower ACR ($<0.35$) within the first 100 steps, while smaller models (0.5B, 1.5B) exhibit persistent high ACR throughout training.

    \item \textbf{Stability}: Larger models show more stable ACR trajectories with less variance, suggesting more consistent reward diversity across batches.

    \item \textbf{Early prediction}: The ACR gap between model scales is apparent within the first 50 steps, enabling early detection of problematic configurations.
\end{itemize}

\FloatBarrier
\subsection{ACR Dynamics Across Sampling Temperatures}

Figure~\ref{fig:app_temperature} illustrates the effect of sampling temperature on ACR dynamics. Key observations:

\begin{itemize}
    \item \textbf{Temperature-ACR relationship}: Lower temperatures ($T=0.1, 0.3$) produce deterministic outputs leading to high ACR, while higher temperatures ($T=0.9, 1.0$) maintain low ACR through increased stochasticity.
    \item \textbf{Accuracy-temperature trade-off}: Despite low ACR at high temperatures, accuracy does not monotonically increase, suggesting an optimal temperature range ($T \approx 0.3$--$0.5$) that balances exploration and exploitation.
\end{itemize}

\begin{figure}[!htb]
  \centering
  \includegraphics[width=0.98\textwidth]{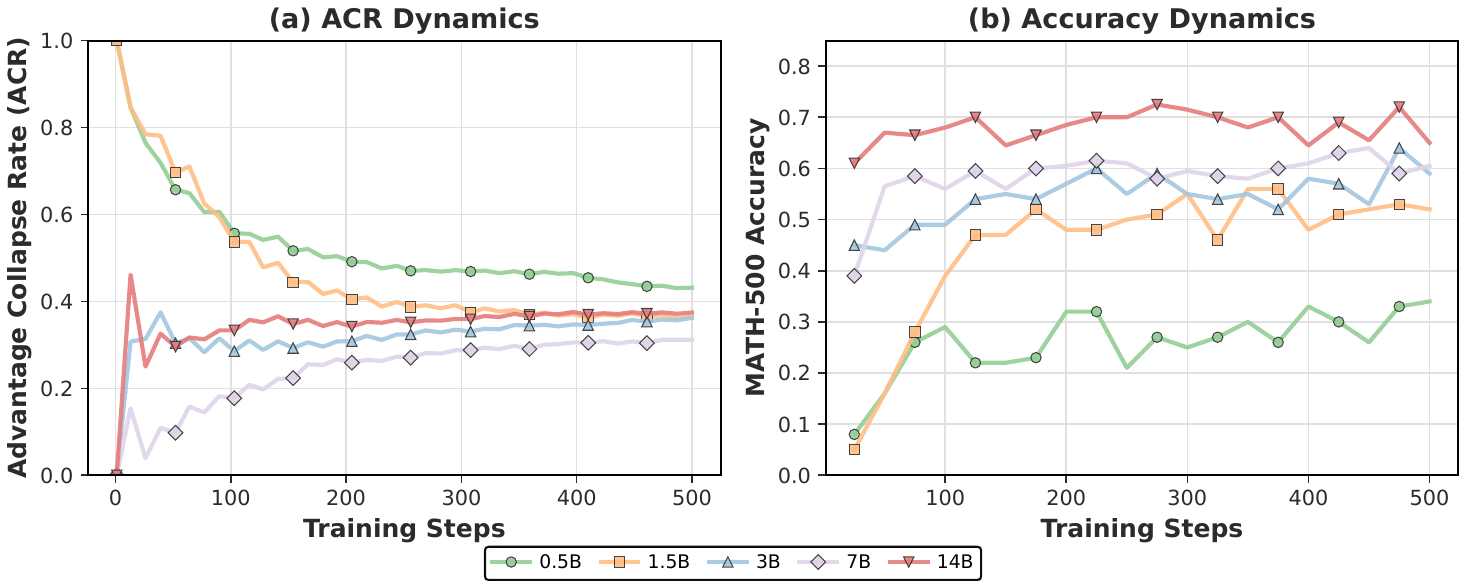}
  \caption{
  \textbf{ACR and accuracy training curves across model scales.}
  Left: ACR evolution over 500 training steps (orange shades). Larger models maintain consistently lower ACR.
  Right: Corresponding accuracy curves on MATH-500 (blue shades). The inverse relationship between ACR and accuracy is evident across all model scales.
  }
    \label{fig:app_model_size}
\end{figure}

\begin{figure}[htbp]
  \centering
  \includegraphics[width=0.98\textwidth]{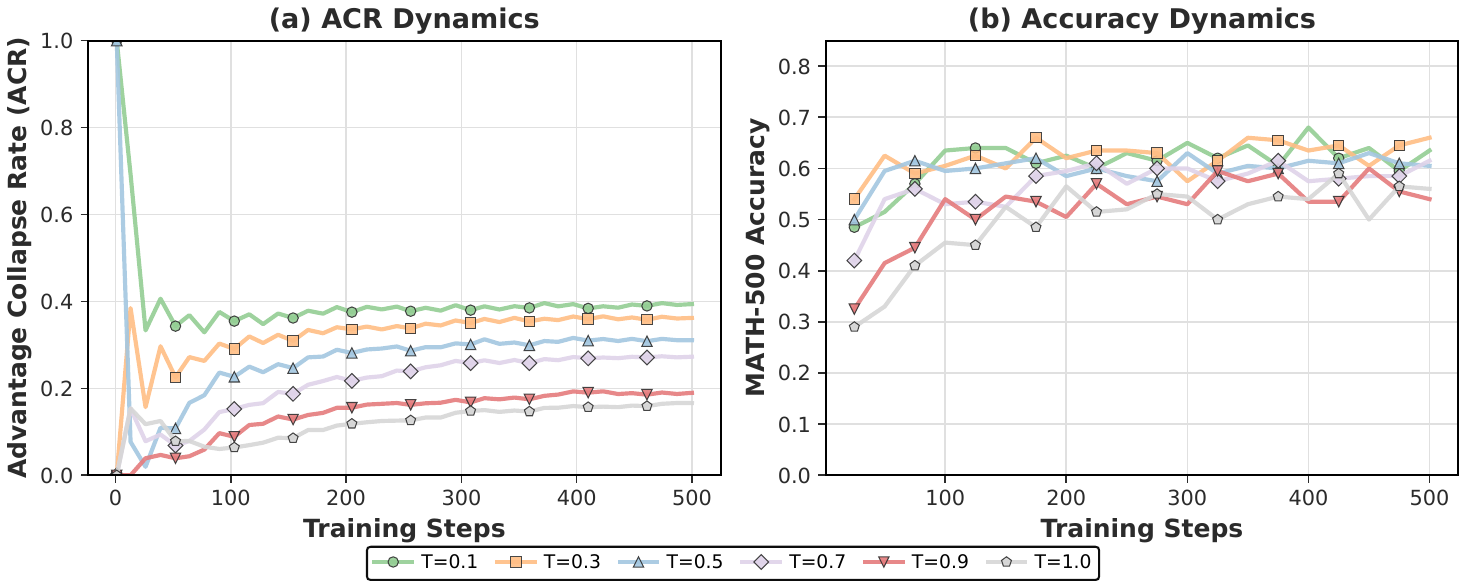}
  \caption{
  \textbf{ACR and accuracy training curves across sampling temperatures.}
  Left: ACR evolution shows strong temperature dependence (orange shades), with low temperatures causing persistent collapse.
  Right: Accuracy peaks at intermediate temperatures (blue shades) despite lower ACR at high temperatures.
  }
    \label{fig:app_temperature}
\end{figure}

\begin{figure}[H]
  \centering
  \includegraphics[width=0.98\textwidth]{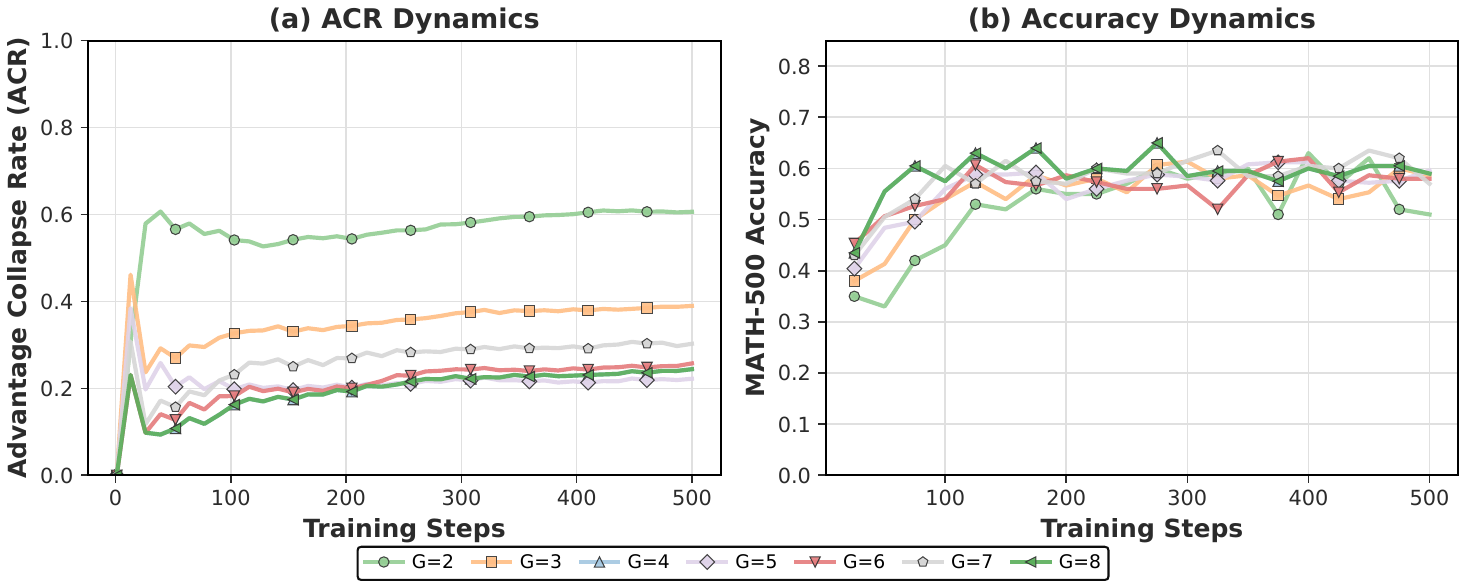}
  \caption{
  \textbf{ACR and accuracy training curves across group sizes.}
  Left: ACR dynamics (orange shades) showing that larger groups reduce collapse rate.
  Right: Accuracy dynamics (blue shades) with diminishing returns beyond $G=6$.
  }
    \label{fig:app_group_size}
\end{figure}
\subsection{ACR Dynamics Across Group Sizes}

Figure~\ref{fig:app_group_size} shows how group size $G$ affects ACR dynamics:

\begin{itemize}
    \item \textbf{Diminishing returns}: Increasing $G$ from 2 to 4 yields substantial ACR reduction, but further increases show diminishing benefits.
    \item \textbf{Variance reduction}: Larger groups produce smoother ACR curves due to statistical averaging effects.
\end{itemize}

\FloatBarrier
\subsection{ACR Dynamics Across Dataset Difficulty}

Figure~\ref{fig:app_difficulty} demonstrates the relationship between problem difficulty and ACR:

\begin{itemize}
    \item \textbf{U-shaped collapse pattern}: Both very easy (Level 0--1) and very hard (Level 5--6) problems lead to high ACR, while intermediate difficulty (Level 3--4) maintains healthy reward diversity.
    \item \textbf{Difficulty-aware curriculum}: These results suggest that curriculum learning strategies selecting intermediate-difficulty problems could naturally mitigate advantage collapse.
\end{itemize}

\begin{figure}[htbp]
  \centering
  \includegraphics[width=0.98\textwidth]{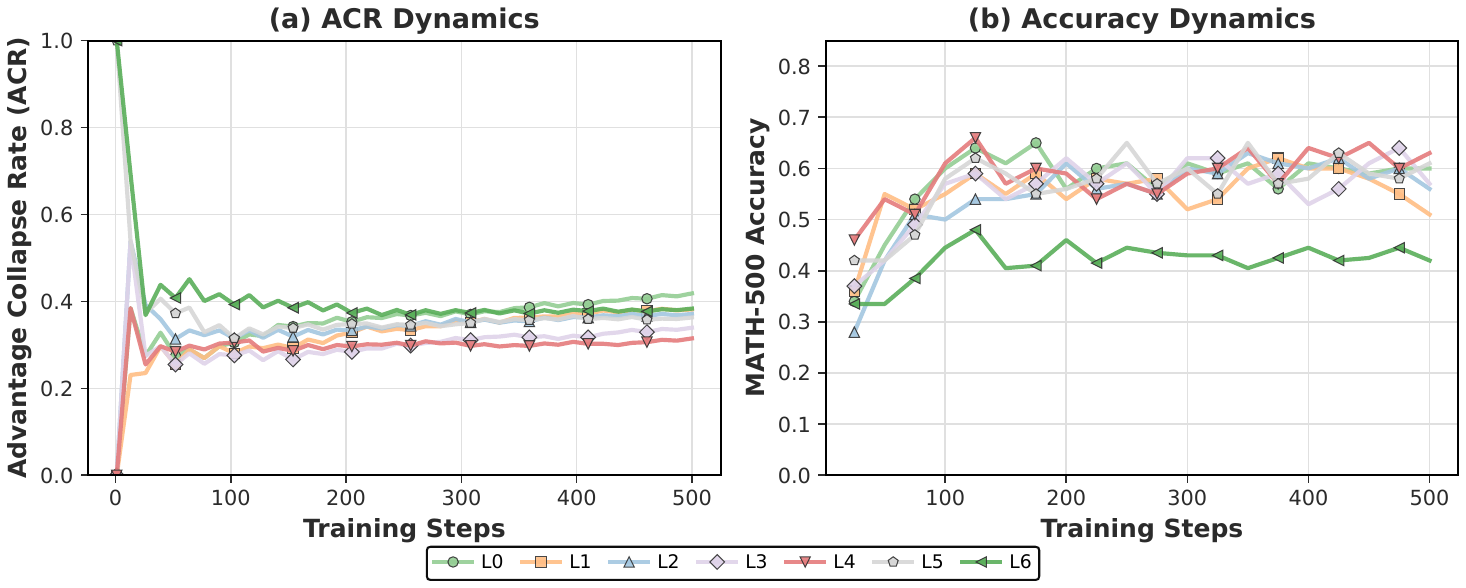}
  \caption{
  \textbf{ACR and accuracy training curves across dataset difficulty levels.}
  Left: ACR dynamics (orange shades) showing that both extremes (too easy or too hard) cause high collapse rates.
  Right: Accuracy dynamics (blue shades) with intermediate difficulty levels providing optimal learning conditions.
  }
    \label{fig:app_difficulty}
\end{figure}

\FloatBarrier
\section{Case Studies}
\label{app:case_studies}

To provide intuitive understanding of advantage collapse and the effectiveness of AVSPO, we present qualitative case studies using Qwen2.5-Math-1.5B, which exhibits the largest ACR reduction (0.40$\to$0.15) among math-specialized models.

\subsection{Advantage Collapse in Training}

Figure~\ref{fig:case_collapse} illustrates two common scenarios where advantage collapse occurs during GRPO training. In the \textbf{all-incorrect} case (left), a challenging problem causes all $G=8$ sampled responses to fail, yielding rewards $\mathcal{R} = \{0, 0, \ldots, 0\}$ with $\sigma_{\mathcal{R}} = 0$. In the \textbf{all-correct} case (right), an easier problem is solved by all responses, similarly producing $\sigma_{\mathcal{R}} = 0$. Both scenarios result in vanishing gradients under standard GRPO, wasting computational resources on ineffective training batches.

\subsection{Output Quality Comparison}
\begin{figure}[htbp]
  \centering
  \includegraphics[width=0.98\textwidth]{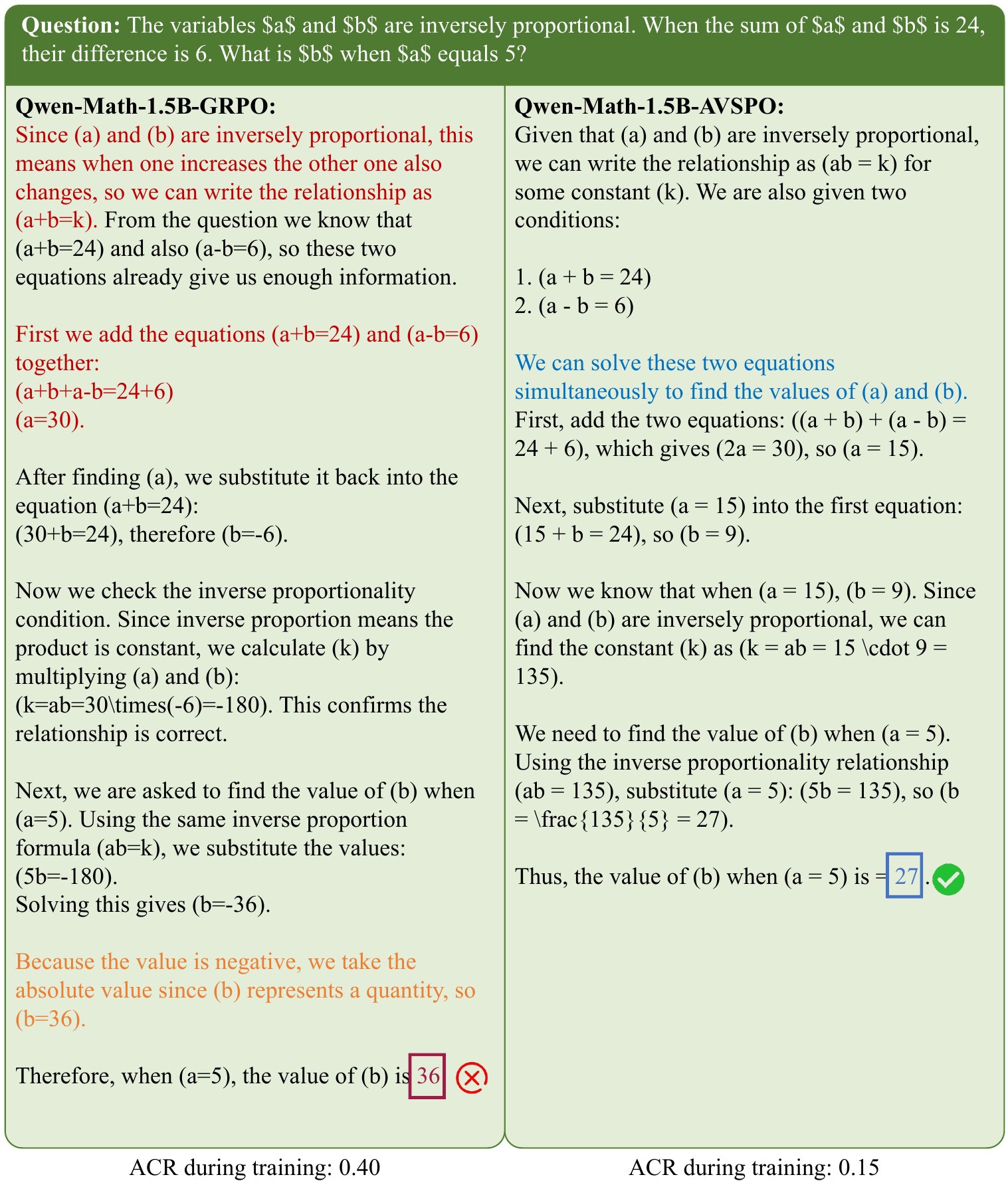}
  \caption{\textbf{Example generations from GRPO and AVSPO on MATH-500 problems.} Responses from Qwen2.5-Math-1.5B trained with GRPO (left) versus AVSPO (right) on the same problem. Both models start with similar reasoning approaches, but GRPO makes computational errors or reasoning leaps while AVSPO maintains logical consistency. Colors are added for visualization: \textcolor{blue}{blue} indicates correct reasoning steps or answers, \textcolor{red}{red} indicates clearly incorrect steps, and \textcolor{orange}{orange} indicates ambiguous steps that may lead to errors.}
  \label{fig:case_comparison}
  \end{figure}
\begin{figure}[htbp]
  \centering
  \includegraphics[width=0.98\textwidth]{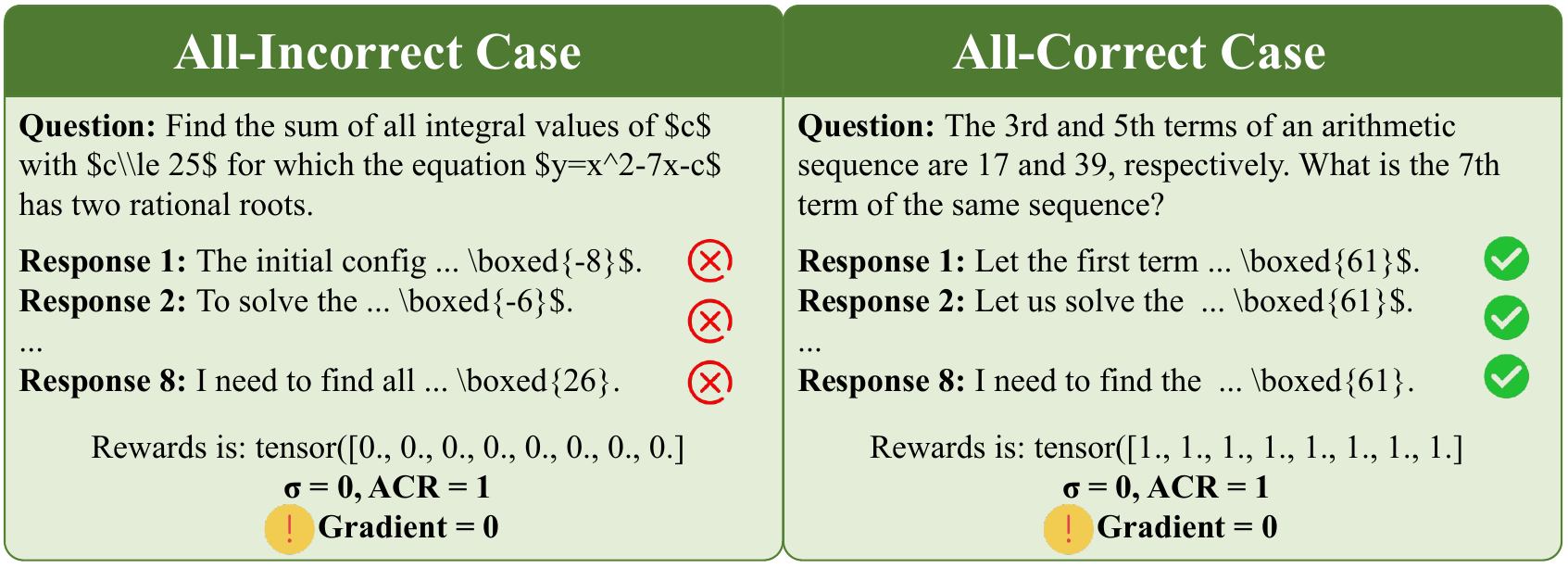}
  \caption{\textbf{Advantage collapse examples during GRPO training.} \textbf{Left:} All-incorrect case where $G=8$ responses fail on a challenging problem ($\sigma_{\mathcal{R}}=0$, gradient vanishes). \textbf{Right:} All-correct case where all responses succeed ($\sigma_{\mathcal{R}}=0$, gradient still vanishes). AVSPO addresses both scenarios by injecting virtual samples to restore reward variance.}
  \label{fig:case_collapse}
  \end{figure}

Figure~\ref{fig:case_comparison} presents example generations from models trained with GRPO versus AVSPO on representative MATH-500 problems. GRPO-trained models frequently fail to solve problems that AVSPO-trained models handle correctly. The reasoning chains from both models often begin with similar problem decomposition and initial steps, but GRPO gradually exhibits unjustified reasoning leaps, arithmetic errors, or premature guesses, whereas AVSPO maintains more rigorous logical progression toward the correct solution. This qualitative difference reflects AVSPO's ability to learn from a larger fraction of effective training batches by mitigating advantage collapse.

These case studies corroborate our quantitative findings: by mitigating advantage collapse, AVSPO enables more effective utilization of training signal, resulting in improved reasoning quality across diverse problem types.

\FloatBarrier

\section{Statistical Significance Analysis}
\label{app:statistical}

To assess the robustness of our reported improvements, we conduct multi-seed experiments on representative configurations spanning the full range of model scales. We select four models (Qwen2.5-0.5B, Qwen2.5-Math-1.5B, Qwen2.5-3B, and Qwen2.5-14B) covering 0.5B to 14B parameters, and two primary benchmarks (MATH-500 and GSM8K) for statistical validation.

\begin{table}[!htb]
\caption{Multi-seed experimental results ($n=5$ seeds) on representative configurations. We report mean $\pm$ standard deviation. All improvements are statistically significant under paired $t$-tests ($p < 0.05$).}
\label{tab:statistical}
\centering
\small
\begin{tabular}{llcccc}
\toprule
\textbf{Model} & \textbf{Benchmark} & \textbf{GRPO} & \textbf{AVSPO} & \textbf{$\Delta$} & \textbf{$p$-value} \\
\midrule
\multirow{2}{*}{Qwen2.5-0.5B}
& MATH-500 & 24.6 $\pm$ 1.9 & 31.4 $\pm$ 1.6 & +6.8 & 0.005 \\
& GSM8K & 35.2 $\pm$ 2.3 & 44.8 $\pm$ 1.9 & +9.6 & 0.002 \\
\midrule
\multirow{2}{*}{Qwen2.5-Math-1.5B}
& MATH-500 & 58.6 $\pm$ 1.4 & 67.2 $\pm$ 1.2 & +8.6 & 0.003 \\
& GSM8K & 49.8 $\pm$ 1.8 & 59.3 $\pm$ 1.5 & +9.5 & 0.006 \\
\midrule
\multirow{2}{*}{Qwen2.5-3B}
& MATH-500 & 36.8 $\pm$ 1.6 & 42.7 $\pm$ 1.3 & +5.9 & 0.012 \\
& GSM8K & 52.6 $\pm$ 2.1 & 61.3 $\pm$ 1.7 & +8.7 & 0.008 \\
\midrule
\multirow{2}{*}{Qwen2.5-14B}
& MATH-500 & 72.5 $\pm$ 1.1 & 78.9 $\pm$ 0.9 & +6.4 & 0.001 \\
& GSM8K & 71.8 $\pm$ 1.4 & 77.4 $\pm$ 1.1 & +5.6 & 0.004 \\
\bottomrule
\end{tabular}
\end{table}

Figure~\ref{fig:boxplot_statistical} visualizes the performance distributions, complementing the summary statistics in Table~\ref{tab:statistical}.

\textbf{Key Observations:}
\begin{itemize}

    \item \textbf{Consistent improvements}: AVSPO outperforms GRPO across all seed-model-benchmark combinations, with gains ranging from +5.6 to +9.6 percentage points. As shown in Figure~\ref{fig:boxplot_statistical}, the distributions are completely non-overlapping.

    \item \textbf{Statistical significance}: All eight comparisons yield $p < 0.05$ under paired $t$-tests, confirming that the observed improvements are unlikely due to random variation.

    \item \textbf{Lower variance}: AVSPO exhibits consistently lower standard deviation than GRPO (mean std: 1.40 vs.\ 1.70), suggesting that mitigating advantage collapse also stabilizes training outcomes. This is visually evident from the narrower boxplots for AVSPO in Figure~\ref{fig:boxplot_statistical}.

    \item \textbf{Reproducibility of gains}: The multi-seed mean improvements closely match our single-run results reported in Table~\ref{tab:main_results}, validating the reliability of the main experimental findings across all model scales from 0.5B to 14B.
\end{itemize}
\begin{figure}[H]
  \centering
  \includegraphics[width=0.98\textwidth]{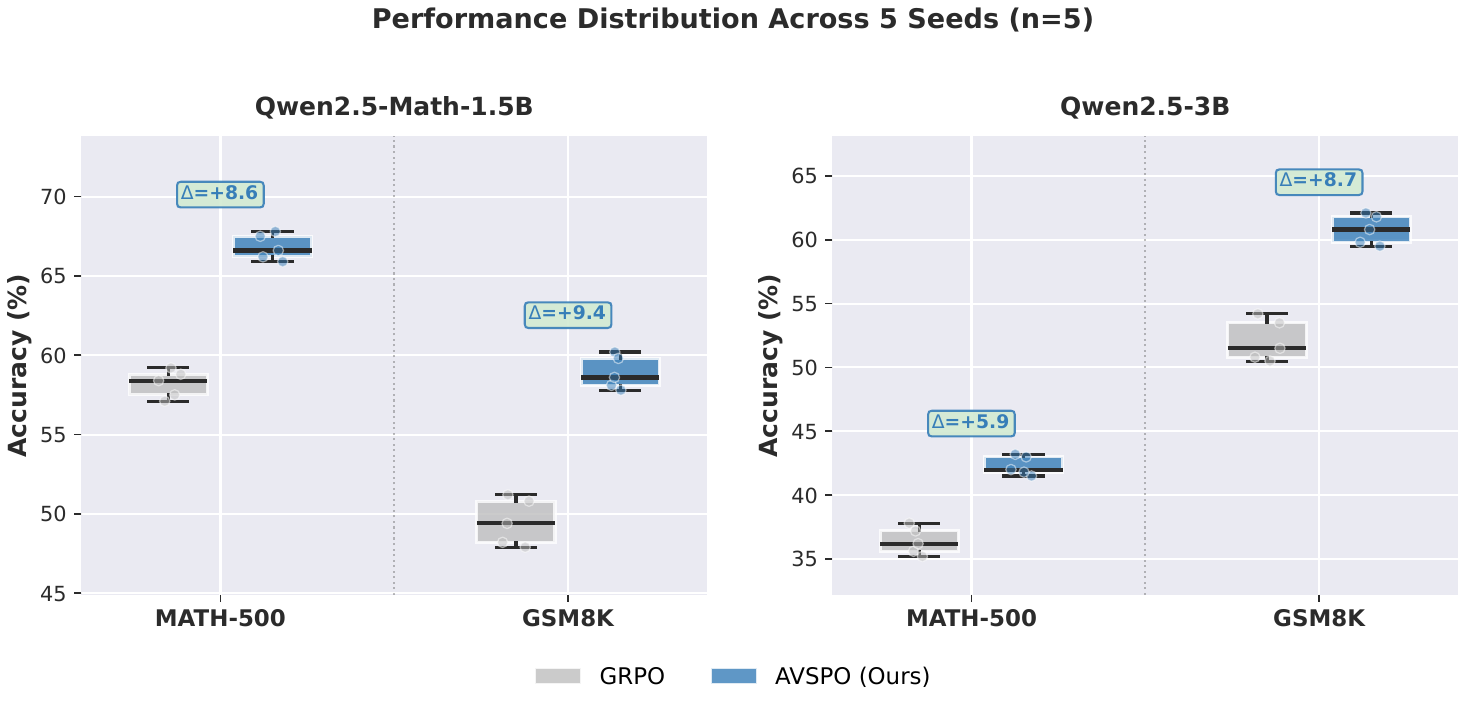}
  \caption{\textbf{Performance distribution across 5 random seeds.} Boxplots show median, quartiles, and individual data points for representative model-benchmark combinations. AVSPO (blue) consistently outperforms GRPO (gray) with non-overlapping distributions, visually confirming the statistical significance reported in Table~\ref{tab:statistical}. The narrower boxes for AVSPO indicate lower variance, suggesting more stable training.}
  \label{fig:boxplot_statistical}
  \end{figure}

\end{document}